\newtheorem{theorem}{Theorem}[section]
\newtheorem{prop}{Proposition}[section]
\newtheorem{definition}{Definition}[section]
\newcommand{\rvw}{\mathbf{w}}
\definecolor{darkblue}{rgb}{0.0, 0.0, 0.55}
 \definecolor{nicegreen}{rgb}{0.0, 0.5, 0.0}
   \definecolor{cornellred}{rgb}{0.7, 0.11, 0.11}
\title{Estimating optimal PAC-Bayes bounds with Hamiltonian Monte Carlo}
\author{%
  Szilvia Ujv{\'a}ry\thanks{Corresponding author} \\
  %Department of Computer Science\\
  University of Cambridge\\
  \texttt{sru23@cam.ac.uk} \\
  \And
  Gergely Flamich \\
  %Department of Engineering \\
  University of Cambridge \\
  \texttt{gf332@cam.ac.uk} \\
  \newline
  \newline
  \And 
  Vincent Fortuin\\
  %??\\
    Helmholtz AI, TU Munich\\
  \texttt{vincent.fortuin@helmholtz-munich.de}\\
  \And
  Jos{\'e} Miguel Hern{\'a}ndez Lobato\\
  %Department of Engineering \\
  University of Cambridge \\
  \texttt{jmh233@cam.ac.uk} \\
}
\begin{document}

\maketitle

\begin{abstract}
An important yet underexplored question in the PAC-Bayes literature is how much tightness we lose by restricting the posterior family to factorized Gaussian distributions when optimizing a PAC-Bayes bound. 
We investigate this issue by estimating data-independent PAC-Bayes bounds using the optimal posteriors, comparing them to bounds obtained using MFVI. 
Concretely, we 
(1) sample from the optimal Gibbs posterior using Hamiltonian Monte Carlo, 
(2) estimate its KL divergence from the prior with thermodynamic integration, and 
(3) propose three methods to obtain high-probability bounds under different assumptions.
Our experiments on the MNIST dataset reveal significant tightness gaps, as much as 5-6\% in some cases.
\end{abstract}
\section{Introduction}
PAC-Bayes is a tool to give high-probability generalization bounds for (generalized\footnote{generalized Bayesian learning extends Bayesian learning by allowing more general losses and priors \citep{benjamin_primer}.}) Bayesian learning algorithms.
The main goal is to produce empirical bounds, also called risk certificates (RC), that are nonvacuous even in complex settings such as neural networks (NN). 
% Current methods for obtaining nonvacuous bounds for NNs train a Bayesian Neural Network (BNN) using mean-field variational inference (MFVI), with an objective derived from a PAC-Bayes bound \citep{Karolina17, tighter}.
Current methods for NNs train a Bayesian Neural Network (BNN) use mean-field variational inference (MFVI) with an objective derived from a PAC-Bayes bound \citep{Karolina17, tighter} to obtain nonvacuous bounds.
However, the resulting bounds are only tight for relatively simple datasets (e.g. MNIST, CIFAR-10) and require data-dependent priors \citep{Karolina17, tighter, gordonwilson}. 
MFVI has been widely promoted in the PAC-Bayes literature as an efficient and accurate alternative to MCMC methods \citep{alquierVI, benjamin_primer}. In the Bayesian learning community, there is more controversy about the expressivity of MFVI \citep{foong, yarin}. Figure \ref{toy_figure} illustrates the trade-offs between MFVI and MCMC methods on the 2 dimensional Rosenbrock function \citep{rosenbrock}.
\par
We contribute to this debate by empirically estimating how tight PAC-Bayes bounds can be, compared to those obtained by MFVI. 
We consider data-independent bounds since these are always loose, hence the effects of using better weight distributions are more visible. %a better posterior approximation are more visible. 
For the bounds in our interest, the optimal (minimizing) probability measure on the weights has the form of a Gibbs 
%(pseudo-)posterior 
distribution
\citep{alquier_userfriendly}. 
We use Hamiltonian Monte Carlo (HMC) \citep{hmc_original_paper, handbook_MCMC} to approximately sample from this distribution and then estimate the bound. 
We find that, especially for small datasets, our RCs can improve on MFVI significantly, such as an 5-6\% improvement over MFVI, resulting in a 10.8\% RC on Binary MNIST. This demonstrates the potential to significantly tighten PAC-Bayes bounds by considering more complicated distributions than factorized Gaussians.
\par 
Concretely, our contributions are as follows:
\setlist[enumerate]{leftmargin=2em}
\begin{enumerate}
    \item We estimate data-independent PAC-Bayes bounds via (1) approximately sampling from the optimal Gibbs posterior, (2) computing the KL divergence with thermodynamic integration and (3) ensuring a high-probability bound.
    \item We thereby demonstrate on versions of MNIST that MFVI bounds can be tightened significantly.
    \item We justify our approach by extensive diagnostic analysis on our HMC samples.
\end{enumerate}

\section{Background}
\begin{figure}[t]
    \centering
    \vspace{-0.15cm}
    \includegraphics[width=\textwidth]{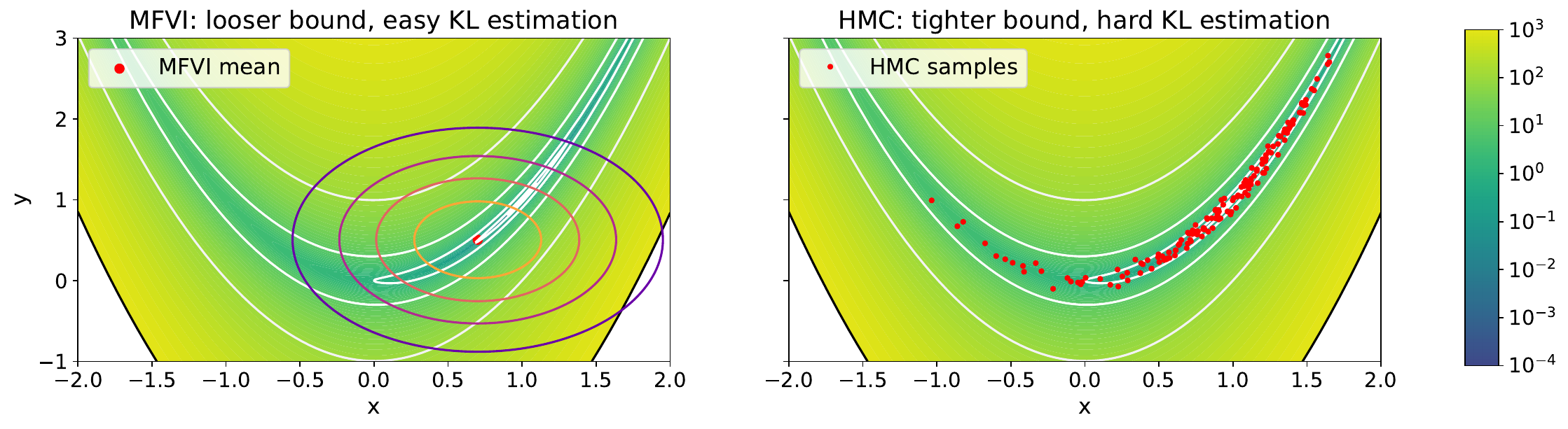}
    
    \caption{Illustration of the trade-offs of using MFVI and MCMC methods (e.g. HMC) in PAC-Bayes estimation. Our method solves the `hard KL estimation' problem. The 2-dimensional Rosenbrock function is used as loss function \citep{rosenbrock}. The prior is centered at (0, 1.5) and has unit variance. Both methods are initialized at (0, 3).}
    \label{toy_figure}
   \vspace{-0.1cm}
\end{figure}
We consider supervised classification tasks. Let $\mathcal{Z}=\mathcal{X} \times \mathcal{Y}$ be a measurable space, where $\mathcal{X} \subset \mathbb{R}^d$ is the space of data samples, and $\mathcal{Y}\subset \mathbb{R}$ is the space of $K$ labels. Let $\mathcal{M}_1(\mathcal{Z})$ denote the set of probability measures on $\mathcal{Z}$. Let $D$ be an unknown distribution $D \in \mathcal{M}_1(\mathcal{Z})$. A learning algorithm receives a set of $n$ i.i.d.\ samples $S=(z_1, ..., z_n)$, i.e. $S \sim D^n$. Further, we fix a weight space $\mathcal{W} \subseteq \mathbb{R}^p$ containing all possible weights. Each weight vector $\mathbf{w}$ maps to a \textit{predictor function} $h_{\mathbf{w}}: \mathcal{X} \to \mathcal{Y}$ that assigns a label $y \in \mathcal{Y}$ to any input $x \in \mathcal{X}$. 
We seek a predictor function that minimizes the \textit{risk} (expected loss) $L(\mathbf{w}):=\mathbb{E}_{z \sim D}\left[l(\mathbf{w}, z) \right]$,
where $l: \mathcal{W} \times \mathcal{Z} \to [0, \infty)$ is a measurable loss function. Let $l^{\mathrm{ce}}$ and $l^{\mathrm{0-1}}$ denote the cross-entropy and $0-1$ loss (error) functions, respectively.
The data-generating distribution being unknown, $L(\mathbf{w})$ is not observable, hence in practice, we compute the \textit{empirical risk}, which depends on our sample set $S$ and is defined as $\hat{L}_S(\mathbf{w}):= \frac{1}{n}\sum_{i=1}^n l(\mathbf{w}, z_i)$. PAC-Bayes bounds are stated for \textit{randomized predictors}. Given a data sample $x$, a randomized predictor makes a prediction at its label using a random sample of weights $\rvw \sim Q$, where $Q \in \mathcal{M}_1(\mathcal{W})$. We may identify the randomized predictor with its distribution $Q$.
The risk of $Q$ is defined as $L(Q):= \mathbb{E}_{\rvw \sim Q}[\mathbb{E}_{z \sim D}\left[l(\rvw, z) \right]]$.
The empirical risk of $Q$ is given by $\hat{L}_S(Q):= \frac{1}{n}\sum_{i=1}^n\mathbb{E}_{\rvw \sim Q}\left[l(\rvw, z_i) \right]]$.
We use notations $L^{\mathrm{ce}}(Q), \hat{L}_S^{\mathrm{ce}}(Q), L^{0-1}(Q)$ and $\hat{L}^{0-1}_S(Q)$ for our risk functionals.

\paragraph{PAC-Bayes bounds} We consider the following PAC-Bayes bounds. For proofs, see \cite{maurer}, \cite{lambda_bound}, \cite{alquier_userfriendly}.
\begin{theorem}
\label{seeger_theorem}
    Fix a loss function $l: \mathcal{W}\times \mathcal{Z} \to [0, 1]$ and fix arbitrary $\delta \in (0, 1)$. For any data-free distribution $P$ over $\mathcal{W}$, simultaneously for all distributions $Q$ over $\mathcal{W}$, with probability at least $1-\delta$, we have
    \begin{equation}
    \label{pac_kl}
    \textbf{kl bound:} \quad
        \mathrm{kl}(\hat{L}_S(Q)||L(Q))\leq \frac{\mathrm{KL}(Q||P)+\log(\frac{2\sqrt{n}}{\delta})}{n}.
    \end{equation}
Further, simultaneously for all $Q \in \mathcal{W}$ and $\lambda \in (0, 2)$, we have the below relaxation of the kl bound   
\vspace{-0.1cm}
\begin{equation}
\lambda \textbf{ bound:} \quad
        L(Q) \leq \frac{\hat{L}_S(Q)}{1-\frac{\lambda}{2}} + \frac{\mathrm{KL}(Q||P)+ \log(\frac{2\sqrt{n}}{\delta})}{n \lambda (1-\frac{\lambda}{2})}.
\label{lambda_bound_Def}
\end{equation}
\end{theorem}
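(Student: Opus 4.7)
The plan is to prove the kl bound via the standard change-of-measure argument combined with Maurer's moment-generating function estimate, and then deduce the $\lambda$ bound as a deterministic convex-analytic relaxation of the kl bound. Throughout, I fix the loss $l$ with values in $[0,1]$ and treat $L(\mathbf{w}), \hat{L}_S(\mathbf{w})$ as the mean and empirical mean of $n$ i.i.d.\ $[0,1]$-valued random variables $l(\mathbf{w}, z_i)$.

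For the kl bound, the first step is Maurer's lemma: for every fixed $\mathbf{w}$,
\begin{equation*}
\mathbb{E}_{S \sim D^n}\bigl[\exp\bigl(n \cdot \mathrm{kl}(\hat{L}_S(\mathbf{w}) \| L(\mathbf{w}))\bigr)\bigr] \leq 2\sqrt{n}.
\end{equation*}
Integrating against the data-free prior $P$ and applying Fubini gives the same bound for the double expectation $\mathbb{E}_S \mathbb{E}_{\mathbf{w} \sim P}[\,\cdot\,]$. Markov's inequality then yields: with probability at least $1-\delta$ over $S$,
\begin{equation*}
\mathbb{E}_{\mathbf{w} \sim P}\bigl[\exp\bigl(n \cdot \mathrm{kl}(\hat{L}_S(\mathbf{w}) \| L(\mathbf{w}))\bigr)\bigr] \leq \tfrac{2\sqrt{n}}{\delta}.
\end{equation*}
On this event, for any $Q \ll P$ the Donsker--Varadhan variational inequality applied to $\phi(\mathbf{w}) = n \cdot \mathrm{kl}(\hat{L}_S(\mathbf{w}) \| L(\mathbf{w}))$ yields
\begin{equation*}
n \, \mathbb{E}_{\mathbf{w} \sim Q}\bigl[\mathrm{kl}(\hat{L}_S(\mathbf{w}) \| L(\mathbf{w}))\bigr] \leq \mathrm{KL}(Q\|P) + \log\tfrac{2\sqrt{n}}{\delta}.
\end{equation*}
Jensen's inequality, using joint convexity of the binary kl divergence, lets me pull the $Q$-expectations inside both arguments: $\mathrm{kl}(\hat{L}_S(Q) \| L(Q)) \leq \mathbb{E}_{\mathbf{w}\sim Q}[\mathrm{kl}(\hat{L}_S(\mathbf{w})\|L(\mathbf{w}))]$. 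Dividing through by $n$ and noting that $Q$ was arbitrary gives the bound uniformly in $Q$, as claimed.

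For the $\lambda$ bound, I use the elementary scalar inequality that for any $p, q \in [0,1]$ and $\lambda \in (0,2)$,
\begin{equation*}
p\,(1 - \tfrac{\lambda}{2}) \leq q + \tfrac{1}{\lambda}\,\mathrm{kl}(q \| p),
\end{equation*}
which follows from the Bernstein-type lower bound on the binary kl (essentially, from $-\log(1-x) \leq x + x^2/(2(1-x))$ applied after rearranging $\mathrm{kl}(q\|p)$). Setting $q = \hat{L}_S(Q)$, $p = L(Q)$, dividing by $1 - \lambda/2 > 0$, and substituting the kl bound for $\mathrm{kl}(\hat{L}_S(Q)\|L(Q))$ produces exactly the stated relaxation. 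Because this last step is deterministic in $\lambda$, the single event of probability $\geq 1-\delta$ from the kl bound suffices for the simultaneous quantification over $\lambda \in (0,2)$ and $Q$.

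The main obstacle is Maurer's lemma itself: proving $\mathbb{E}_S[\exp(n\,\mathrm{kl}(\hat{L}_S(\mathbf{w})\|L(\mathbf{w})))] \leq 2\sqrt{n}$ requires a non-trivial tightening of the standard Chernoff-type estimate, typically by splitting the expectation according to the value of $\hat{L}_S(\mathbf{w})$ and using the local-limit-type bound $\binom{n}{k} p^k (1-p)^{n-k} \leq 1/\sqrt{2\pi k(1-k/n)}$ with careful handling of boundary cases. Everything else---Fubini, Markov, Donsker--Varadhan, Jensen, and the scalar relaxation inequality---is routine and algebraic.
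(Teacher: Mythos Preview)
Your proposal is correct and follows the standard route found in the references the paper cites (Maurer for the kl bound, Thiemann et al.\ and Alquier for the $\lambda$ relaxation); the paper itself does not give a proof but simply defers to those sources. The only remark worth making is that you should be a bit more careful justifying the scalar inequality $p(1-\tfrac{\lambda}{2}) \leq q + \tfrac{1}{\lambda}\mathrm{kl}(q\|p)$: it follows cleanly from the Legendre-dual representation $\mathrm{kl}(q\|p) \geq -\lambda q - \log(pe^{-\lambda}+1-p)$ combined with the cumulant bound $\log(pe^{-\lambda}+1-p) \leq -\lambda p + \tfrac{\lambda^2}{2}p$ for $\lambda \geq 0$, rather than from the logarithm expansion you gesture at, but this is a minor presentational point and does not affect validity.
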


\begin{theorem}
\label{gibbs_density}
    The optimal stochastic predictor corresponding to the $\lambda$ bound with prior $P$ that has density $p(\rvw)$ is a Gibbs measure $Q^*_{\lambda}$ with density $q^*_{\lambda}(\rvw|z)=e^{-n\lambda \hat{L}_S(\mathbf{w})}p(\rvw) / \mathbb{E}_{\mathbf{w} \sim P}\left[e^{-n\lambda \hat{L}_S(\mathbf{w})}\right]$.

%\begin{equation}
%\label{gibbs_density}
%    q^*_{\lambda}(\rvw|z)=\frac{e^{-n\lambda \widetilde{L}_S(\mathbf{w})}p(\rvw)}{\mathbb{E}_{\mathbf{w} \sim P}\left[e^{-n\lambda \widetilde{L}_S(\mathbf{w})}\right]}.
%\end{equation}
\end{theorem}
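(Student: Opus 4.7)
The plan is to reduce the optimization to a standard Gibbs variational principle problem and then use the nonnegativity of the KL divergence.

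First, I would observe that in the $\lambda$ bound, the only $Q$-dependent quantities in the right-hand side are $\hat{L}_S(Q) = \mathbb{E}_{\mathbf{w} \sim Q}[\hat{L}_S(\mathbf{w})]$ and $\mathrm{KL}(Q\Vert P)$. Multiplying through by the positive constant $n\lambda(1-\lambda/2)$, which is positive for $\lambda \in (0,2)$, minimizing the bound over $Q$ is equivalent to minimizing the functional
\begin{equation*}
\mathcal{F}(Q) := n\lambda\, \mathbb{E}_{\mathbf{w}\sim Q}[\hat{L}_S(\mathbf{w})] + \mathrm{KL}(Q\Vert P).
\end{equation*}

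Next, I would apply the standard Gibbs variational identity. Define $Z := \mathbb{E}_{\mathbf{w}\sim P}[e^{-n\lambda \hat{L}_S(\mathbf{w})}]$; since $l \in [0,1]$ gives $\hat{L}_S(\mathbf{w}) \in [0,1]$, we have $Z \in [e^{-n\lambda}, 1]$, so $Z$ is finite and strictly positive, ensuring $Q^*_\lambda$ is a valid probability measure with density $q^*_\lambda(\mathbf{w}) = e^{-n\lambda \hat{L}_S(\mathbf{w})} p(\mathbf{w}) / Z$. For any $Q$ absolutely continuous with respect to $P$ (otherwise $\mathrm{KL}(Q\Vert P) = \infty$ and the bound is vacuous), write
\begin{equation*}
\mathrm{KL}(Q\Vert P) = \mathbb{E}_{\mathbf{w}\sim Q}\left[\log \frac{q(\mathbf{w})}{p(\mathbf{w})}\right] = \mathbb{E}_{\mathbf{w}\sim Q}\left[\log \frac{q(\mathbf{w})}{q^*_\lambda(\mathbf{w})}\right] + \mathbb{E}_{\mathbf{w}\sim Q}\left[\log \frac{q^*_\lambda(\mathbf{w})}{p(\mathbf{w})}\right].
\end{equation*}
Substituting the explicit form of $q^*_\lambda$ into the second term yields $\mathbb{E}_{\mathbf{w}\sim Q}[-n\lambda \hat{L}_S(\mathbf{w}) - \log Z]$, so
\begin{equation*}
\mathcal{F}(Q) = \mathrm{KL}(Q\Vert Q^*_\lambda) - \log Z.
\end{equation*}

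Finally, since $\log Z$ does not depend on $Q$ and $\mathrm{KL}(Q\Vert Q^*_\lambda) \geq 0$ with equality if and only if $Q = Q^*_\lambda$ (by Gibbs' inequality), the unique minimizer of $\mathcal{F}$, and therefore of the $\lambda$ bound, is $Q^*_\lambda$. I do not anticipate a real obstacle here: the argument is a textbook application of the Donsker–Varadhan / Gibbs variational principle. The only subtlety worth mentioning is verifying that $Q^*_\lambda$ is well-defined as a probability measure, which is immediate from the boundedness of $l$.
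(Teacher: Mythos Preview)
Your argument is correct and is exactly the standard Donsker--Varadhan / Gibbs variational principle derivation: the paper does not supply its own proof but defers to \cite{alquier_userfriendly}, where the same decomposition $\mathcal{F}(Q)=\mathrm{KL}(Q\Vert Q^*_\lambda)-\log Z$ followed by nonnegativity of the KL is used. Nothing to add.
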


For simplicity, we use the $\lambda$ bound with $\lambda=1$ fixed. Even this simple setting improves over MFVI risk certificates (RCs) .

\paragraph{Empirical estimation of PAC-Bayes bounds} RCs are most often computed using MFVI on a BNN with an objective function derived from a PAC-Bayes bound. For posterior sampling, a bounded version of the cross-entropy loss $\widetilde{l^{\mathrm{ce}}}$ (defined in Supplementary Material \ref{trans_loss}) is used. We use notation $\widetilde{L}_S^{\mathrm{ce}}(Q)$ for when we use the compute empirical risks with $\widetilde{l^{\mathrm{ce}}}$. The final RCs can be computed in terms of the $0-1$ loss too. 
%The distribution parametrizing the objective is restricted to a mean-field Gaussian and is optimized in the mean and variance parameters, utilizing the reparametrization trick. 
The KL divergence of the final approximate posterior with respect to the prior is available analytically. The risk $\hat{L}_S(Q)$ is approximated as a Monte Carlo average over samples $\mathbf{w} \sim Q$. The following concentration inequality is then used to 
%loosen our estimate for $\hat{L}_S(Q)$ and 
ensure an upper-bound on $L(Q)$ with pre-specified probability $1-\delta$ \citep{caruana}.
\begin{theorem}
\label{caruana_theorem}
    Suppose $W_1, W_2, ..., W_m \sim Q$ are i.i.d., and $\hat{Q}_m=\sum_{j=1}^m \delta_{W_j}$ is their empirical distribution. Then for any $\delta' \in (0, 1)$ with probability $1-\delta'$, we have that 
    ${\hat{L}_S(Q) \leq \mathrm{kl} ^{-1}\left( \hat{L}_S(\hat{Q}_m), \frac{1}{m}\log(\frac{2}{\delta'}) \right).}$
    % \begin{align}
    %  \hat{L}_S(Q) \leq \mathrm{kl} ^{-1}\left( \hat{L}_S(\hat{Q}_m), \frac{1}{m}\log(\frac{2}{\delta'}) \right).
    % \end{align}
   % \begin{equation}
   %     \hat{L}_S(Q) \leq \mathrm{kl} ^{-1}\left( \hat{L}_S(\hat{Q}_m), \frac{1}{m}\log(\frac{2}{\delta'}) \right).
   % \end{equation}
\end{theorem}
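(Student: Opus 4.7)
The statement is a Chernoff-style concentration inequality for the empirical mean of a bounded random variable, dressed up in PAC-Bayes notation. My first step is to set $X_j := \hat{L}_S(W_j) \in [0,1]$ and observe that, since $W_1,\dots,W_m$ are i.i.d.\ from $Q$, the $X_j$ are i.i.d.\ with common mean $\mu := \mathbb{E}_{W \sim Q}[\hat L_S(W)] = \hat L_S(Q)$, while their sample mean is $\bar X := \frac{1}{m}\sum_{j=1}^m X_j = \hat L_S(\hat Q_m)$ (reading $\hat Q_m$ as the normalised empirical measure, as the notation in Theorem~\ref{caruana_theorem} implicitly intends). The target then becomes the statement that, with probability at least $1-\delta'$,
\[
\mu \;\le\; \mathrm{kl}^{-1}\!\left(\bar X,\ \tfrac{1}{m}\log(2/\delta')\right).
\]

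Next I would invoke the classical Chernoff--Hoeffding inequality in its binary KL form: for any $[0,1]$-valued i.i.d.\ random variables with mean $\mu$, one has $\mathbb{P}(\bar X \le a) \le \exp(-m\,\mathrm{kl}(a\|\mu))$ when $a \le \mu$, and symmetrically $\mathbb{P}(\bar X \ge a) \le \exp(-m\,\mathrm{kl}(a\|\mu))$ when $a \ge \mu$. These follow from the Cramer--Chernoff exponential-moment method combined with the Bernoulli MGF upper bound $\mathbb{E}[e^{\lambda X}] \le 1-\mu+\mu e^{\lambda}$, which is valid for any $X \in [0,1]$ with mean $\mu$ by convexity of the exponential. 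A union bound over the two tails then gives $\mathbb{P}\bigl(\mathrm{kl}(\bar X\|\mu) \ge t\bigr) \le 2 e^{-mt}$, and choosing $t = \tfrac{1}{m}\log(2/\delta')$ yields $\mathrm{kl}(\bar X\|\mu) \le t$ with probability at least $1-\delta'$. By the usual definition $\mathrm{kl}^{-1}(q,\varepsilon) := \sup\{p \in [0,1] : \mathrm{kl}(q\|p) \le \varepsilon\}$, the resulting event is contained in the event $\mu \le \mathrm{kl}^{-1}(\bar X, t)$, which is exactly the desired bound on $\hat L_S(Q)$.

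The hard part is essentially absent here: this is a repackaging of the method-of-types / Chernoff bound and is usually cited (e.g.\ from \cite{maurer}) rather than reproved. The only small points I would be careful about are (i) verifying that the loss is $[0,1]$-valued so that $X_j \in [0,1]$, exactly as assumed in Theorem~\ref{seeger_theorem}; (ii) treating $\hat Q_m$ as the \emph{normalised} empirical measure so that $\hat L_S(\hat Q_m)$ genuinely equals the sample average of the $X_j$; and (iii) emphasising that $Q$ is held fixed throughout, so that the probability is taken only over the Monte Carlo draws $W_1,\dots,W_m$ and no additional PAC-Bayes-style union bound over $Q$ is required.
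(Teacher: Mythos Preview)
Your proposal is correct and is exactly the standard argument. The paper itself does not prove Theorem~\ref{caruana_theorem}; it is stated as background and attributed to \cite{caruana}, so there is no in-paper proof to compare against. Your reduction to i.i.d.\ $[0,1]$-valued variables, the Cram\'er--Chernoff bound with the Bernoulli MGF domination, the two-sided union bound yielding $\mathbb{P}(\mathrm{kl}(\bar X\|\mu)\ge t)\le 2e^{-mt}$, and the inversion via the paper's definition of $\mathrm{kl}^{-1}$ in Section~\ref{estim_kl_bound} are all sound. Your side remarks are also on point: the loss must be $[0,1]$-valued (guaranteed by the paper's use of $\widetilde{l^{\mathrm{ce}}}$ or $l^{0\text{-}1}$), $\hat Q_m$ must be read as the normalised empirical measure despite the unnormalised notation in the theorem statement, and $Q$ is indeed held fixed so the randomness is only in the Monte Carlo draws. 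One minor observation: for the one-sided conclusion actually needed here, the lower-tail Chernoff bound alone already suffices and would give $\log(1/\delta')$ rather than $\log(2/\delta')$; the factor of $2$ in the paper's statement reflects the two-sided version you used, matching what is typically quoted.
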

\section{Method}

\begin{algorithm}[t]
\caption{PAC-Bayes estimation with HMC}\label{method_alg}
\begin{algorithmic}
\State Sample $\rvw \sim Q^*$ with HMC,
\State Estimate $\hat{L}_S^{\mathrm{0-1}}(Q^*)$ from $\rvw$ using a high-probability bound,\Comment{Suppl. M. \ref{high-prob_bounds_section}}
\For{$\beta$ in [0, 1] discretization}
    \State Sample $\rvw_\beta \sim Q^{*\beta} \propto e^{-\beta\widetilde{L}_S^{\mathrm{ce}}(\rvw)}p(\rvw)$ with HMC,
    \State Estimate $\widetilde{L}_S^{\mathrm{ce}}(Q^{*\beta})$ from $\rvw_\beta$ using a high-probability bound \Comment{Suppl. M. \ref{high-prob_bounds_section}}
\EndFor
\State Estimate $\log p(z)$ with thermodynamic integration, \Comment{Eqn. \ref{ti_logz_eq}}
\State Compute $\mathrm{KL}(Q^*||P)$, \Comment{Prop \ref{caruana_theorem}}
\State Compute bounds in Eqs. \ref{pac_kl}, \ref{lambda_bound_Def}  using $\hat{L}_S^{\mathrm{0-1}}(Q^*)$ and $\mathrm{KL}(Q^*||P)$ .

\end{algorithmic}
\end{algorithm}

\noindent
Our goal is to adapt the above pipeline to more general posterior approximations, specifically HMC samples with a Gibbs target density. Our method is summarized in Algorithm \ref{method_alg}. The key challenges are estimating $\mathrm{KL}(Q^*||P)$ and the fact that MCMC samples are not independent. 
%We present our method in three steps.
\vspace{-0.15cm}
\paragraph{Approximate sampling from the Gibbs posterior}
As HMC requires gradients of the unnormalized target distribution, the $\widetilde{l^{\mathrm{ce}}}$ loss is used to generate samples with the numerator of the Gibbs density formula in Theorem \ref{gibbs_density} as the target density.
\vspace{-0.15cm}
\paragraph{Estimating the KL divergence}
Having obtained approximate samples from $Q^*$ as described above, we now estimate $\mathrm{KL}(Q^*||P)$ by reducing the problem to (generalised) marginal likelihood estimation via the following fact.
\begin{prop}
\label{kl_eq}
Fix a prior measure $P$ with density $p(\rvw)$ and define a (possibly generalized) likelihood $p(z | \rvw)$. For any distribution $Q$ that is dependent on the data $z$, with density $q(\rvw|z)$ the following holds
    \begin{equation}
     \mathrm{KL}(q(\rvw|z)||p(\rvw)) = \mathbb{E}_q \left[\log (p(z|\rvw)) \right]  - \log(Z)  + \mathrm{KL}(q(\rvw|z) || p(\rvw|z)).
\end{equation}
where $Z=p(z)$, the (generalised) marginal likelihood.
\end{prop}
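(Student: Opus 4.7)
The plan is to derive the identity by an application of Bayes' rule followed by a direct algebraic manipulation of the two KL divergences appearing in the claim. Since $Z = p(z) = \int p(z|\rvw) p(\rvw) \, d\rvw$ is the (generalised) marginal likelihood, the (generalised) posterior satisfies the Bayes identity
\begin{equation}
\log p(\rvw | z) = \log p(z | \rvw) + \log p(\rvw) - \log Z,
\end{equation}
valid $q$-almost everywhere provided we assume the usual absolute continuity conditions (which hold in our setting since the prior is Gaussian and the posterior $q(\rvw|z)$ is defined on the same weight space).

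First, I would expand the target KL divergence $\mathrm{KL}(q(\rvw|z) \| p(\rvw|z))$ using the definition and substitute the Bayes identity for $\log p(\rvw|z)$. This yields
\begin{equation}
\mathrm{KL}(q(\rvw|z) \| p(\rvw|z)) = \mathbb{E}_q[\log q(\rvw|z)] - \mathbb{E}_q[\log p(z|\rvw)] - \mathbb{E}_q[\log p(\rvw)] + \log Z,
\end{equation}
using that $\log Z$ is constant in $\rvw$ so it factors out of the expectation. Next, I would recognize that the first and third terms on the right-hand side are exactly $\mathrm{KL}(q(\rvw|z) \| p(\rvw))$ by definition, and then rearrange to isolate $\mathrm{KL}(q(\rvw|z) \| p(\rvw))$ on one side.

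There is essentially no obstacle here beyond careful book-keeping; the only mild subtlety is making sure the expectations are well-defined, i.e., that $\mathbb{E}_q[\log p(z|\rvw)]$ and $\mathbb{E}_q[\log p(\rvw)]$ are finite, and that $q$ is absolutely continuous with respect to both $p(\rvw)$ and $p(\rvw|z)$ so both KL divergences in the statement are finite simultaneously. Under these mild regularity assumptions (satisfied, e.g., whenever $q$ is the Gibbs posterior $q_\lambda^*$ of Theorem \ref{gibbs_density} with the bounded surrogate loss $\widetilde{l^{\mathrm{ce}}}$ playing the role of $-\log p(z|\rvw)/n$), the identity follows immediately.
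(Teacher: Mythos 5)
Your proposal is correct and is essentially the same argument as the paper's: both rest on the two factorizations of the joint, $p(z,\rvw)=p(\rvw)p(z|\rvw)=p(z)p(\rvw|z)$ (i.e., Bayes' rule), followed by taking expectations under $q$ and regrouping terms into the two KL divergences. Your explicit remarks on absolute continuity and finiteness of the expectations are a welcome addition but do not change the substance of the proof.
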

For $Q=Q^*$ the true posterior, the last term is $0$ and the term of most interest becomes the generalised log marginal likelihood $\log(Z)$.
We estimate this term using thermodynamic integration \citep{TI} as
\begin{equation}
    -\log (Z) = \int_{0}^1 \mathbb{E}_{\rvw \sim \pi_{\beta}} \left[n   \hat{L}_S(\mathbf{w}) \right] \mathrm{d}\beta, \text{ where } \pi_{\beta} \propto e^{-\beta   \hat{L}_S(\mathbf{w})}p(\rvw).
    \label{ti_logz_eq}
\end{equation}
Notice that $\pi_\beta$ is itself a Gibbs distribution, hence computing the thermodynamic integral reduces to sampling from Gibbs distributions for values of $\beta \in [0, 1]$. The details of this derivation are given in Supplementary Material \ref{using_ti}. When computing the integral, to control the tail of our bound, it is necessary to ensure that we overestimate its true value with high probability. We use the trapezium rule, which we show in Supplementary Material \ref{ti_results} to upper-bound the integral.
 
\paragraph{High-probability upper bounds}
The last step is to control the overall probability of our bound using concentration inequalities on the tails of all of our estimators. Note that Theorem \ref{caruana_theorem} used with MFVI is valid only for independent samples. We propose the following three options for the general setting. Each has different assumptions and their comparison serves as a sanity check on the magnitude of our estimates. Derivations are given in Supplementary Material \ref{high-prob_bounds_section}.
\setlist[itemize]{leftmargin=2em}
\begin{itemize}
    \item We use Theorem \ref{caruana_theorem} after thinning our samples to reduce autocorrelations.\\
    \textit{This is the simplest option, but we can only guarantee uncorrelatedness.}
    \item An asymptotic, probability $(1 - \alpha/2)$ confidence interval of the form ${\left[0, c \right)}$, where
    \begin{align}
    c = \frac{(1 + \epsilon)\hat{\sigma}_m}{\sqrt{2\alpha m}} + \frac{1}{m}\sum_{i=1}^m \hat{L}_S(\rvw_i).
    \end{align}
    This bound requires assumptions on the order of the bias and variance of our estimator $\frac{1}{m}\sum_{i=1}^m \hat{L}_S(\rvw_i)$, where $\rvw_i \sim Q$, and a consistent estimator of its asymptotic variance \cite{clt_stuff}. \\
    % \item An asymptotic confidence interval of form ${\left[0, \frac{1}{m}\sum_{i=1}^m \hat{L}_S(\rvw_i) + m^{-\frac{1}{2}}\hat{\sigma}_m \left( 2\alpha\right)^{-\frac{1}{2}}(1+ \epsilon) \right)}$ \cite{clt_stuff}, which requires assumptions on the order of the bias and variance of our estimator $\frac{1}{m}\sum_{i=1}^m \hat{L}_S(\rvw_i)$, $\rvw_i \sim Q$, and a consistent estimator of its asymptotic variance. \\
    \textit{This gives the tightest RCs, and has reasonable assumptions but they are hard to check.}
    
    \item Using a sanity-check bound $\mathrm{KL}(Q||P) \leq n\mathbb{E}_G[\widetilde{L}_S^{\mathrm{ce}}(\rvw)] + \mathrm{KL}(G||P)$. In Prop. \ref{sillyprop}, we prove this result for any $G$ satisfying $\mathrm{KL}(Q||Q^*) \leq \mathrm{KL}(G||Q^*) + n\mathbb{E}_{Q}\left[\widetilde{L}_S^{\mathrm{ce}}(\rvw) \right] $, where $Q^*$ is the Gibbs posterior. Taking $G$ to be the Gaussian from MFVI, the assumption only requires that $Q$ is not much further from the true posterior and $G$. If $\mathbb{E}_{Q}\left[\widetilde{L}_S^{\mathrm{0-1}}(\rvw) \right] < \mathbb{E}_{G}\left[\widetilde{L}_S^{\mathrm{0-1}}(\rvw) \right]$, this bound can be tighter than MFVI. \\
    \textit{This has the mildest assumptions and hence is easiest to estimate, but is the most conservative.}

%where we upper-bound $\mathrm{KL}(Q||P)$ with $ n   \mathbb{E}_{G}\left[\widetilde{L}_S^{\mathrm{ce}}(\rvw)\right]  + \mathrm{KL}(G||P)$
\end{itemize}

\section{Experiments and results}

We apply our method to three versions of the MNIST train dataset: Binary MNIST, where labels are mapped $y \mapsto y \text{ (mod } 2)$, MNIST reduced to $14 \times 14$ pixels and full MNIST. We also compare results when using only half of the 60,000 datapoints. We use a data-independent factorized Gaussian prior and small MLPs. For each configuration, we generate 4 independent HMC chains of 5000 samples each. We compare results to estimating the same RCs with MFVI. All RCs are valid with probability at least 0.95. Further details are given in Supplementary Material \ref{hyperparam}.
\paragraph{HMC diagnostics}
\begin{figure}[t]
    \centering
 
\begin{subfigure}
         \centering
    \includegraphics[width=0.95\textwidth]{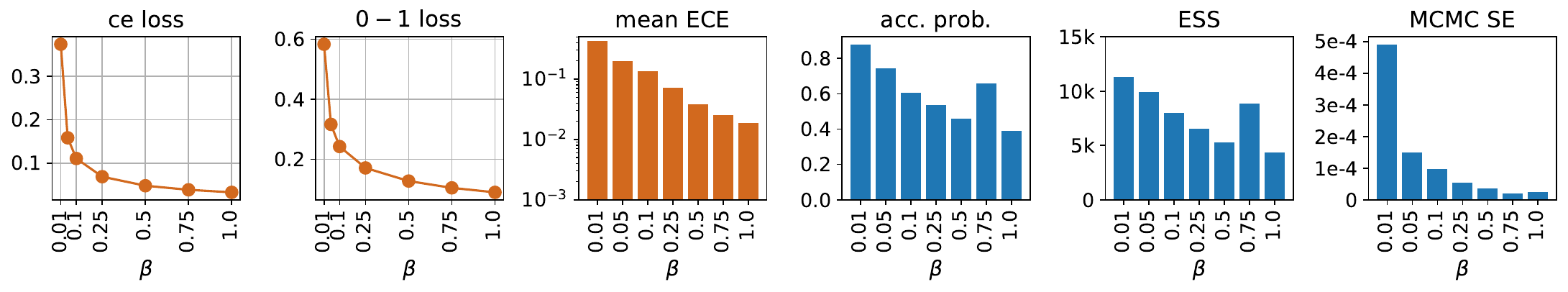}

\end{subfigure}

\begin{subfigure}
         \centering
    \includegraphics[width=0.95\textwidth]{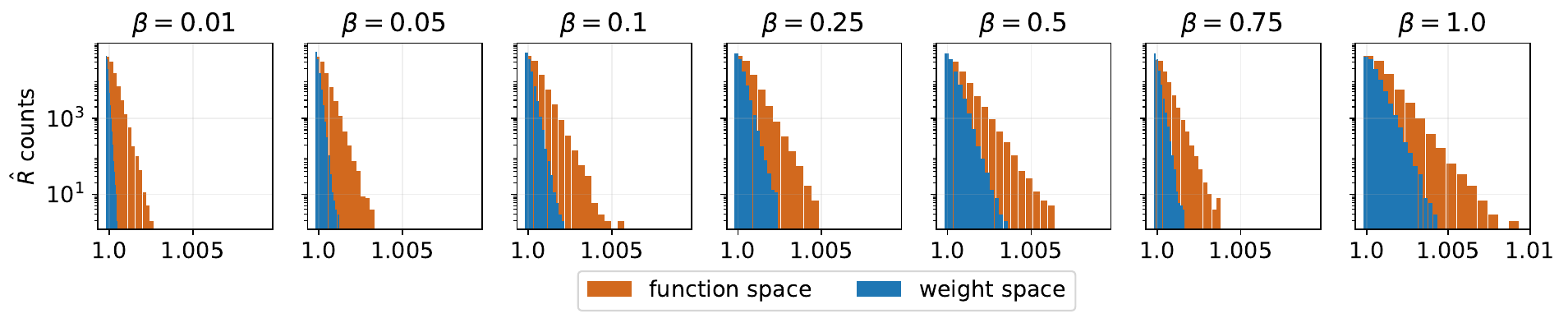}
\end{subfigure}
\caption{HMC diagnostics for MNIST - half, over 4 chains of length 5000, for each value of $\beta$.}
\label{half_diag_figure_}
\end{figure}
Figure \ref{half_diag_figure_} shows HMC diagnostics for the MNIST-half dataset. The rest of the diagnostic results are supplied in Supplementary Material \ref{further_diagnostics_Section}.
We report expected calibration error, HMC acceptance probability, effective sample size, MCMC standard error and $\hat{R}$ statistics along with cross-entropy and $0-1$ losses for all datasets and $\beta$ values. Unsurprisingly, performance increases as $\beta$ increases, since our samples get more similar to the true posterior. The $\hat{R}$ statistic is very close to 1.0 both in weight and function space, indicating approximate convergence in our chains. 

%Figure \ref{diag_figure} shows diagnostics applied on our HMC samples on MNIST - half. Models with large $\beta$ values perform well. Unsurprisingly, performance increases as $\beta$ increases, since our samples get more similar to the true posterior. The $\hat{R}$ statistic is very close to 1.0 both in weight and function space, indicating approximate convergence in our chains. Diagnostic results for the rest of our datasets are supplied in Supplementary Material \ref{further_diagnostics_Section}. 

%\begin{figure}[h!]
%    \centering
%\begin{subfigure}
%         \centering
%    \includegraphics[width=0.95\textwidth]{Figures/mnist_half_diagnostics_1.pdf}
%\vspace{-1em}
%\end{subfigure}

%\begin{subfigure}
%         \centering
%    \includegraphics[width=0.95\textwidth]{Figures/mnist_half_diagnostics_2.pdf}
%\end{subfigure}
%\caption{HMC diagnostics for MNIST - half, over 4 chains of length 5000, for each value of $\beta$.}
%\label{diag_figure}
%\end{figure}
\newpage
\paragraph{Bound calculation results}
Table \ref{rc_table} shows the results of our RC estimation, compared to MFVI. Our method always improves $0-1$ RCs, most significantly for Binary MNIST, where the gap can reach 10\% in some configurations. The gap increases when more data is used. More analysis is given in Supplementary Material \ref{further_results_Section}.

\begin{table}[t]
\caption{Training and test metrics and RC estimates using our three bounds. \textbf{Bold} numbers indicate the tighter certificate out of the Gibbs and MFVI ones for the same dataset. }
\resizebox{\textwidth}{!}{
\begin{tabular}{@{}llllllllllll@{}}
\toprule
\multicolumn{3}{l}{Setup}            & \multicolumn{3}{l}{Train/test stats} & \multicolumn{3}{l}{0-1 RC with kl bound}           & \multicolumn{3}{l}{0-1 RC with $\lambda$ bound}    \\  \midrule
Method   & Dataset             & Model & Train 0-1    & Test 0-1    & KL/n     & kl inverse      & asympt          & naive           & kl inverse      & asympt          & naive           \\ \midrule
MFVI     & Binary Half         & 1L    & 0.0924       & 0.0982      & 0.0106   & 0.1600          & 0.1426          & 0.1600          & 0.2385          & 0.2090          & 0.2385          \\
Gibbs p. & Binary Half         & 1L    & 0.0562       & 0.0561      & 0.0205   & \textbf{0.1342} & \textbf{0.1065} & \textbf{0.1417} & \textbf{0.1820} & \textbf{0.1428} & \textbf{0.1877} \\
MFVI     & Binary              & 1L    & 0.0960       & 0.0928      & 0.0105   & 0.1640          & 0.1452          & 0.1640          & 0.2452          & 0.2136          & 0.2452          \\
Gibbs p. & Binary              & 1L    & 0.0404       & 0.0415      & 0.0195   & \textbf{0.1080} & \textbf{0.0702} & \textbf{0.1184} & \textbf{0.1435} & \textbf{0.0969} & \textbf{0.1566} \\ \midrule
MFVI     & $14 \times 14$ Half & 2L    & 0.1348       & 0.1319      & 0.0148   & 0.2449          & 0.2059          & 0.2449          & 0.3595          & 0.3070          & 0.3595          \\
Gibbs p. & $14 \times 14$ Half & 2L    & 0.0954       & 0.1010      & 0.0477   & \textbf{0.1888} & \textbf{0.1805} & \textbf{0.2324} & \textbf{0.3273} & \textbf{0.2481} & \textbf{0.3180} \\
MFVI     & $14 \times 14$      & 2L    & 0.1389       & 0.1313      & 0.0140   & 0.2379          & 0.1991          & 0.2379          & 0.3595          & 0.2930          & 0.3595          \\
Gibbs p. & $14 \times 14$      & 2L    & 0.0695       & 0.0723      & 0.0381   & \textbf{0.1855} & \textbf{0.1335} & \textbf{0.1920} & \textbf{0.2507} & \textbf{0.1810} & \textbf{0.2600} \\ \midrule
MFVI     & MNIST Half          & 2L    & 0.1256       & 0.1264      & 0.0199   & 0.2302          & 0.2025          & 0.2302          & 0.3387          & 0.2911          & 0.3387          \\
Gibbs p. & MNIST Half          & 2L    & 0.0898       & 0.0970      & 0.0428   & \textbf{0.2248} & \textbf{0.1740} & \textbf{0.2247} & \textbf{0.3091} & \textbf{0.2377} & \textbf{0.3068} \\
MFVI     & MNIST               & 2L    & 0.1236       & 0.1200      & 0.0196   & 0.2070          & 0.1987          & 0.2070          & 0.2977          & 0.1714          & 0.2977          \\
Gibbs p. & MNIST               & 2L    & 0.0653       & 0.0691      & 0.0334   & \textbf{0.1759} & \textbf{0.1269} & \textbf{0.1880} & \textbf{0.2381} & \textbf{0.2822} & \textbf{0.2553} \\ \bottomrule
\end{tabular}}
\label{rc_table}
\end{table}

\section{Discussion}
\paragraph{Related work} 
Our work opposes the view that MFVI does not considerably loosen PAC-Bayes bounds. The theoretical paper \citet{alquierVI} argues for this view by showing that Gibbs posteriors concentrate around the global minimum of the loss at the same rate as their best Gaussian approximations. However, their result is proven for binary linear classification only, and the best possible Gaussian approximation may not be attainable in practice. Many other works criticize MFVI especially for small neural nets, including \citet{foong} and \citet{yarin} in Bayesian learning. In the PAC-Bayes setting, \citet{pitas} experimentally demonstrates the limitations of MFVI in PAC-Bayes and improves RCs using a simplified KFAC Laplace approximation. To our knowledge, we are the first to tightly estimate the optimal value of PAC-Bayes bounds (i.e. in the Gibbs posterior), with the closest work being \citet{Karolina18}, who attempt this task in the context of data-dependent priors. They find that their bound is very loose in practice, which may be due to not using tempering in the KL estimation. The work \citet{rich_paper}  evaluates the tightness limits of the whole PAC-Bayes framework (rather than specific bounds) and restricts to very small datasets (30-60 datapoints). Our work builds on \citet{izmailov} and \citet{wenzel2020good} who scaled HMC up to neural networks. 
%The closest to our work is \cite{Karolina18}, who attempt to estimate a PAC-Bayes bound in a Gibbs posterior but in the context of data-dependent priors. They find that their bound is very loose in practice, which may be due to not using tempering in the KL estimation. \cite{tighter} compares PAC-Bayes bounds as BNN optimization objectives using MFVI. \cite{foong} and \cite{yarin} debate MFVI in Bayesian Learning and find that it is most inaccurate for small models. The work \cite{rich_paper}  evaluates the tightness limits of the whole PAC-Bayes framework (rather than specific bounds) and restricts to very small datasets (30-60 datapoints). The theoretical paper \cite{alquierVI} shows that Gibbs posteriors concentrate around the global minimum of the loss at the same rate as their best Gaussian approximations. However, their result is proven for binary linear classification only, and the best possible Gaussian approximation may not be attainable in practice. \cite{pitas} experimentally demonstrates the limitations of MFVI in PAC-Bayes and improves RCs using a KFAC Laplace approximation. Our work builds on \cite{izmailov} and \cite{wenzel2020good} who scaled HMC up to neural networks. 

%Alternative MCMC techniques used for different PAC-Bayes tasks transdimensional algorithms \citep{benjamin_regression, benjamin_ranking, benjamin_clustering}, SGHMC and SGLD \citep{SGHMC, SGLD}.
\paragraph{Limitations}
The chief limitation of our method is that PAC-Bayes bound estimates require assumptions that are impossible to completely verify for MCMC samples. What is possible are empirical estimates accompanied by robust diagnostics. Hence we make claims under sets of assumptions of varying strength. Even our most conservative estimates demonstrate the inaccuracy of MFVI.

\paragraph{Conclusion} 
We have demonstrated that using MFVI in PAC-Bayes estimation leads to data-independent bounds that are much looser than necessary. We provided estimates of optimal bounds on MNIST versions, using HMC and thermodynamic integration. Our estimates have plausible magnitude and are supported by extensive diagnostic analysis. The improvement over MFVI is largest for small models and adding more data tightens our estimates more than their MFVI approximations. Our results demonstrate the need for better posterior approximations for tight bounds.

\begin{ack}
The authors would like to thank Ferenc Huszár for helpful conversations and feedback on the manuscript. SU was supported by an MPhil scholarship from Emmanuel College, Cambridge, and the Cambridge Trust. GF acknowledges funding from DeepMind. VF was supported by a Branco Weiss Fellowship.

\end{ack}
\bibliographystyle{plainnat}
\newpage
%\small{
\bibliography{refs}

\newpage
\section{Supplementary Material}
\subsection{Overview of contents}
The supplementary material is structured as follows. Section \ref{prop31} proves the simple result needed to deduce KL estimation to marginal likelihood estimation. Sections \ref{intro_ti}, \ref{using_ti} and \ref{ti_results} are centered around thermodynamic integration. We introduce the method, apply it in our setting and justify why the trapezium rule gives an upper bound on our thermodynamic integral. Section \ref{high-prob_bounds_section} details our high-probability bounds used to control the probability of our PAC-Bayes estimates, including their assumptions. Then, Section \ref{implementation_section} details technical aspects of the implementation and hyperparameters used. Finally, Sections \ref{further_diagnostics_Section} and \ref{further_results_Section} supply further diagnostic and RC estimation results and analysis.

\paragraph{A note on notation} Throughout the supplementary material, for the sake of generality, we state our results for general $\lambda$ values. However, the reader should keep in mind that we fix $\lambda=1$ in our experiments.

\subsection{Details on the method}
In this section, we provide further details, derivations, and proofs necessary for our method. 

\subsubsection{Proof of Proposition 3.1}
\label{prop31}
\begin{proof}
we can write the ELBO in two ways:
\begin{equation}
    \mathbb{E}_q \left[\log  \frac{p(z, \rvw)}{q(\rvw|z)} \right]=\mathbb{E}_q \left[\log   \frac{p(\rvw)p(z|\rvw)}{q(\rvw|z)}  \right]=\mathbb{E}_q \left[\log \frac{p(z)p(\rvw|z)}{q(\rvw|z)}  \right]
\end{equation}
We can expand both terms as:
\begin{equation}
    \mathbb{E}_q \left[\log p(z|\rvw) \right] + \mathbb{E}_q \left[\log \frac{p(\rvw)}{q(\rvw|z)} \right] = \mathbb{E}_q \left[\log p(z) \right] + \mathbb{E}_q \left[\log \frac{p(\rvw|z)}{q(\rvw|z)} \right],
\end{equation}
which we can rewrite as 
\begin{equation}
    \mathbb{E}_q \left[\log p(z|\rvw) \right] - \mathbb{E}_q \left[\log \frac{q(\rvw|z)}{p(\rvw)} \right] = \mathbb{E}_q \left[\log p(z) \right] - \mathbb{E}_q \left[\log \frac{q(\rvw|z)}{p(\rvw|z)} \right].
\end{equation}
We can now rewrite this as KL divergences and notice that $p(z)$ is independent of $\rvw$, hence
\begin{equation}
    \mathbb{E}_q \left[\log p(z|\rvw) \right] - \mathrm{KL}[q(\rvw|z)||p(\rvw)] = \log p(z) - \mathrm{KL}[q(\rvw|z) || p(\rvw|z)].
\end{equation}
We can reorder this as: 
\begin{equation}
     \mathrm{KL}[q(\rvw|z)||p(\rvw)] = \mathbb{E}_q \left[\log p(z|\rvw) \right]  - \log p(z)  + \mathrm{KL}[ q(\rvw|z) || p(\rvw|z)].
\end{equation}
\end{proof}
\subsubsection{Introduction to Thermodynamic Integration}
%\label{ti_section}
\label{intro_ti}
Thermodynamic integration (TI) is a physics-inspired method that allows us to approximate intractable normalizing constants of high dimensional distributions \citep{TI}. The main insight is to transform the problem into estimating the difference of two log normalizing constants. Since we are required to estimate $\log(Z)$, this framework suits our purposes.

Consider two probability measures $\Pi_1, \Pi_2 \in \mathcal{M}_1(\mathcal{W})$ with corresponding densities $\pi_1(\rvw), \pi_2(\rvw)$ and their unnormalized versions

\begin{equation}
\pi_{i}(\mathbf{w})=\frac{\tilde{\pi}_{i}(\mathbf{w})}{W_{i}}, \quad \quad W_{i}=\int \tilde{\pi}(\mathbf{w}) \mathrm{d} \mathbf{w}, \quad i \in\{0,1\}. 
\end{equation}

\noindent To apply TI, we form a geometric path  between $\pi_{0}(\mathbf{w})$ and $\pi_{1}(\mathbf{w})$ via a scalar parameter $\beta \in[0,1]$:

\begin{equation}
    \pi_{\beta}(\mathbf{w})=\frac{\tilde{\pi}_{\beta}(\mathbf{w})}{W_{\beta}}=\frac{\tilde{\pi}_{1}(\mathbf{w})^{\beta} \tilde{\pi}_{0}(\mathbf{w})^{1-\beta}}{W_{\beta}}, \quad W_{\beta}=\int \tilde{\pi}_{\beta}(\mathbf{w}) \mathrm{d} \mathbf{w}, \quad \beta \in[0,1].
\end{equation}
The central identity of thermodynamic integration is as follows. The right-hand side of Equation \ref{ti_eq} is referred to as the \textit{thermodynamic integral}.

\begin{prop}
Define the \textit{potential} as $U_{\beta}(\mathbf{w})=\log \tilde{\pi}_{\beta}(\mathbf{w})$ and let $U_{\beta}^{\prime}(\mathbf{w}):=\frac{\partial}{\partial \beta} U_{\beta}(\mathbf{w})$. Then,
\begin{equation}
\label{ti_eq}
   \log \left(W_{1}\right)-\log \left(W_{0}\right)=\int_{0}^{1} \mathbb{E}_{\rvw \sim \pi_{\beta}}\left[U_{\beta}^{\prime}(\mathbf{w})\right] \mathrm{d}\beta.
\end{equation}
\end{prop}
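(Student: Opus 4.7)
The plan is to establish the identity by recognizing the right-hand side as the integral of $\frac{d}{d\beta}\log W_\beta$ and then applying the fundamental theorem of calculus. First, I would differentiate $\log W_\beta$ with respect to $\beta$ directly:
\begin{equation*}
\frac{d}{d\beta} \log W_\beta = \frac{1}{W_\beta} \frac{d}{d\beta} \int \tilde{\pi}_\beta(\rvw)\, d\rvw.
\end{equation*}
The key step is to interchange differentiation and integration, yielding
\begin{equation*}
\frac{d}{d\beta} \log W_\beta = \frac{1}{W_\beta} \int \frac{\partial \tilde{\pi}_\beta(\rvw)}{\partial \beta}\, d\rvw.
\end{equation*}

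Next, using the log-derivative identity $\frac{\partial \tilde{\pi}_\beta}{\partial \beta} = \tilde{\pi}_\beta \cdot \frac{\partial \log \tilde{\pi}_\beta}{\partial \beta} = \tilde{\pi}_\beta \cdot U'_\beta$, one obtains
\begin{equation*}
\frac{d}{d\beta} \log W_\beta = \int \frac{\tilde{\pi}_\beta(\rvw)}{W_\beta} U'_\beta(\rvw)\, d\rvw = \mathbb{E}_{\rvw \sim \pi_\beta}\bigl[U'_\beta(\rvw)\bigr].
\end{equation*}
Integrating both sides over $\beta \in [0,1]$ and applying the fundamental theorem of calculus gives the desired identity, since $\int_0^1 \frac{d}{d\beta}\log W_\beta\, d\beta = \log W_1 - \log W_0$. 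The explicit form of the geometric path is convenient here: writing $U_\beta(\rvw) = \beta \log \tilde{\pi}_1(\rvw) + (1-\beta)\log \tilde{\pi}_0(\rvw)$, we have $U'_\beta(\rvw) = \log\tilde{\pi}_1(\rvw) - \log\tilde{\pi}_0(\rvw)$, which is independent of $\beta$ and ensures the expression is well-defined wherever both $\tilde{\pi}_0$ and $\tilde{\pi}_1$ are positive.

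The main obstacle is justifying the exchange of derivative and integral, which requires a dominated-convergence-type argument: one needs an integrable envelope for $\partial_\beta \tilde{\pi}_\beta(\rvw)$ uniformly in $\beta$ on compact subintervals of $(0,1)$. Under mild regularity assumptions on $\tilde{\pi}_0$ and $\tilde{\pi}_1$ (for instance, that $|\log \tilde{\pi}_1 - \log \tilde{\pi}_0|$ is bounded in expectation against $\pi_\beta$ uniformly in $\beta$, or that $\tilde{\pi}_0$ and $\tilde{\pi}_1$ are mutually absolutely continuous with suitably bounded likelihood ratios), this interchange is standard. In the PAC-Bayes application of interest, where $\tilde{\pi}_1(\rvw) = e^{-n\hat{L}_S(\rvw)}p(\rvw)$ and $\tilde{\pi}_0(\rvw) = p(\rvw)$, the integrand $U'_\beta(\rvw) = -n\hat{L}_S(\rvw)$ is bounded because the loss $\hat{L}_S$ takes values in $[0,1]$, so the regularity conditions hold trivially and the proof goes through without subtlety.
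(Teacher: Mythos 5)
Your proof is correct and follows the standard thermodynamic-integration derivation (differentiate $\log W_\beta$, exchange derivative and integral, apply the log-derivative identity, then the fundamental theorem of calculus), which is exactly the argument the paper defers to by citing Appendix A of its reference rather than reproducing. Your treatment of the interchange of differentiation and integration is also sound: since $U'(\rvw)=-n\lambda\widetilde{L}^{\mathrm{ce}}_S(\rvw)$ is bounded and nonpositive here, $|\partial_\beta\tilde{\pi}_\beta|\leq n\lambda\, p(\rvw)$ gives an integrable envelope, so dominated convergence applies as you claim.
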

\begin{proof}
Please refer to Appendix A in \cite{TI}.
    %The proof can be found in Appendix \ref{appendix_1}.
\end{proof}
\subsubsection{Using Thermodynamic Integration}
\label{using_ti}
As seen in Section \ref{intro_ti}, thermodynamic integration requires defining two probability measures with unnormalized densities, $\tilde{\pi}_i(\mathbf{w})$ and calculates the difference of the log normalizing constants $\log (W_1) - \log (W_0).$ If $\tilde{\pi}_0$ is defined such that $W_0=1$, then TI calculates the log normalizing constant of $\tilde{\pi}_1$. We thus define
\begin{equation}
    \tilde{\pi}_0(\mathbf{w}) := p(\mathbf{w}), \quad
    \tilde{\pi}_1(\mathbf{w}) := p(z, \mathbf{w}),
\end{equation}
thus $W_0=1$ and $W_1=\int p(z, \mathbf{w}) d\rvw = Z$. In TI, we then define the geometric path for $\beta \in [0, 1]$:
\begin{equation}
    \tilde{\pi}_{\beta}(\mathbf{w}):=p(z, \mathbf{w})^{\beta}p(\mathbf{w})^{1-\beta}.
\end{equation}
Then we have that $U_{\beta}(\mathbf{w})=\log \tilde{\pi}_{\beta}(\mathbf{w}) = \beta \log p(z, \mathbf{w}) + (1-\beta)\log p(\mathbf{w})$ and thus
\begin{equation}
    \frac{\partial}{\partial\beta}U_{\beta}(\mathbf{w}) = \log{\frac{p(z, \mathbf{w})}{p(\mathbf{w})}} = \log{\frac{e^{-n\lambda \widetilde{L}^{\mathrm{ce}}_S(\mathbf{w})} p(\mathbf{w})}{p(\mathbf{w})}}=-n\lambda \widetilde{L}^{\mathrm{ce}}_S(\mathbf{w}).
\end{equation}
Then, the thermodynamic integration formula (Equation \ref{ti_eq} yields the following form for the log normalizing constant  
\begin{equation}
    \log(Z) = \int_{0}^1 \mathbb{E}_{\rvw \sim \pi_{\beta}} \left[-n \lambda \widetilde{L}^{\mathrm{ce}}_S(\mathbf{w}) \right] \mathrm{d}\beta.
\end{equation}
We will estimate $\mathbb{E}_{\rvw \sim \pi_{\beta}} \left[-n \lambda \widetilde{L}^{\mathrm{ce}}_S(\mathbf{w}) \right]$ with a Monte Carlo average, using samples $\rvw \sim \pi_{\beta}$. To sample from each $\pi_{\beta}$, we can use HMC again to draw from the log joint
\begin{flalign}
    \log \tilde{\pi}_{\beta}(\mathbf{w})&=\beta \log p(z, \mathbf{w}) + (1-\beta) \log p(\mathbf{w}) \\
    &= - \beta \lambda n \widetilde{L}^{\mathrm{ce}}_S(\mathbf{w}) + \beta \log p(\mathbf{w}) + (1-\beta) \log p(\mathbf{w}) \\
    &= -\beta \lambda n \widetilde{L}^{\mathrm{ce}}_S(\mathbf{w}) + \log p(\mathbf{w}).
\end{flalign}

%Defining $\tilde{\lambda}=\beta \lambda$ and comparing to Eq \ref{logjoint}, we see that $\log \tilde{\pi}_{\beta}(\mathbf{w}) = J(\tilde{\lambda})$. Hence sampling from $\pi_{\beta}$ for various $\beta$ is equivalent to sampling from $Q^*_{\beta \lambda}=Q^*_{\tilde{\lambda}}$. In principle, we can thus reuse samples obtained for $\lambda$ to estimate bounds for $\beta\lambda$ too. 

\subsubsection{The trapezium rule upper bounds the thermodynamic integral}
\label{ti_results}
In this section, we state our results for the \textit{negative} log normalizing constant, $-\log(Z)$ for convenience.
A naive strategy for calculating the integrand for $\log(Z)$ is to use a Monte Carlo average for the integral. 
\begin{flalign}
\label{upper_sums}
    -\log(Z) &= \int_{0}^1 \mathbb{E}_{\rvw \sim \pi_{\beta}} \left[n \lambda \widetilde{L}^{\mathrm{ce}}_S(\mathbf{w}) \right] \mathrm{d}\beta \\
    & \approx \frac{1}{B} \sum_{b=0}^{B-1} \left[\mathbb{E}_{\rvw \sim \pi_{\frac{b}{B}}} \left[n \lambda \widetilde{L}^{\mathrm{ce}}_S(\mathbf{w}) \right]\right]
\end{flalign}
Notice that the right-hand-side in Equation \ref{upper_sums} is a left Riemann sum (i.e. the integral is approximated at the left end of each subinterval of $[0, 1]$) on the function $\mathbf{w} \mapsto \mathbb{E}_{\rvw \sim \pi_{b/B}} \left[n \lambda \widetilde{L}^{\mathrm{ce}}_S(\mathbf{w}) \right]$, where each subinterval has length $\frac{1}{B}$.

The fact that this quantity upper bounds the KL divergence is established in \cite{TI} by showing that the integrand is a decreasing function of $\beta$. 
Let $g(\beta)=\mathbb{E}_{\rvw \sim \pi_{\beta}} \left[-U'_{\beta}(\rvw) \right]=\mathbb{E}_{\rvw \sim \pi_{\beta}} \left[n \lambda \widetilde{L}^{\mathrm{ce}}_S(\mathbf{w}) \right]$, our integrand. We notice that 
$U'_{\beta}(\rvw)=-n \lambda \widetilde{L}^{\mathrm{ce}}_S(\mathbf{w})$ is independent of $\beta$, hence we may abandon the subscript and use notation $U'(\rvw)=-n \lambda\widetilde{L}^{\mathrm{ce}}_S(\mathbf{w}))$. \cite{TI} show that
\begin{prop} $\frac{\partial g(\beta)}{\partial \beta} = - \mathrm{Var}_{\rvw \sim \pi_{\beta}}\left[U'(\rvw) \right]\leq 0.$
\label{paperprop}
\end{prop}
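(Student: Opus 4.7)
The plan is to compute $\partial_\beta g(\beta)$ directly by differentiating under the integral sign, and recognise the resulting expression as a variance via a standard ``score function'' computation.

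First, I would write $g(\beta) = -\int \pi_\beta(\rvw)\, U'(\rvw)\, \mathrm{d}\rvw$, where $\pi_\beta(\rvw) = \exp(U_\beta(\rvw))/W_\beta$ and, crucially, $U'(\rvw) = \log p(z,\rvw) - \log p(\rvw) = -n\lambda \widetilde{L}^{\mathrm{ce}}_S(\rvw)$ does not depend on $\beta$. Next I would compute the logarithmic derivative of $\pi_\beta$: since $\log \pi_\beta(\rvw) = U_\beta(\rvw) - \log W_\beta$, we get
\begin{equation}
\partial_\beta \log \pi_\beta(\rvw) = U'(\rvw) - \partial_\beta \log W_\beta.
\end{equation}
Differentiating $W_\beta = \int \exp(U_\beta) \,\mathrm{d}\rvw$ under the integral sign yields $\partial_\beta \log W_\beta = \mathbb{E}_{\rvw \sim \pi_\beta}[U'(\rvw)]$, so that
\begin{equation}
\partial_\beta \pi_\beta(\rvw) = \pi_\beta(\rvw)\bigl(U'(\rvw) - \mathbb{E}_{\rvw \sim \pi_\beta}[U'(\rvw)]\bigr).
\end{equation}

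Substituting this into $\partial_\beta g(\beta) = -\int \partial_\beta \pi_\beta(\rvw)\, U'(\rvw)\, \mathrm{d}\rvw$ (using that $U'$ is $\beta$-independent) gives
\begin{equation}
\partial_\beta g(\beta) = -\mathbb{E}_{\rvw \sim \pi_\beta}\!\left[U'(\rvw)\bigl(U'(\rvw) - \mathbb{E}_{\rvw \sim \pi_\beta}[U'(\rvw)]\bigr)\right] = -\mathrm{Var}_{\rvw \sim \pi_\beta}[U'(\rvw)],
\end{equation}
which is nonpositive since a variance is always $\geq 0$. This is exactly the claim.

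The only subtlety I anticipate is justifying the interchange of $\partial_\beta$ with the integrals over $\rvw$; this is a routine dominated-convergence argument, which goes through provided $U'(\rvw) = -n\lambda \widetilde{L}^{\mathrm{ce}}_S(\rvw)$ is integrable against $\pi_\beta$ uniformly in $\beta$ on compact subintervals of $[0,1]$. In our setting $\widetilde{L}^{\mathrm{ce}}_S$ is bounded (by construction of $\widetilde{l^{\mathrm{ce}}}$), so $U'$ is uniformly bounded and the exchange is trivially valid, making the entire argument rigorous.
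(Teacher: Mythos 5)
Your proof is correct and is precisely the standard score-function computation (differentiate $\pi_\beta(\rvw)=\exp(U_\beta(\rvw))/W_\beta$ under the integral, use $\partial_\beta \log W_\beta = \mathbb{E}_{\rvw\sim\pi_\beta}[U'(\rvw)]$, and recognize the resulting expression as a variance) that the paper delegates entirely to Appendix A of its cited thermodynamic-integration reference rather than reproducing. Your closing observation that the interchange of $\partial_\beta$ with the integrals is licensed because $U'(\rvw) = -n\lambda\widetilde{L}^{\mathrm{ce}}_S(\rvw)$ is bounded (so $\exp(U_\beta)$ and its $\beta$-derivative are dominated by multiples of the prior density) is a welcome detail that the paper does not supply.
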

\begin{proof}
    Please refer to Appendix A in \cite{TI}. Note that they define $g(\beta)$ as the \textit{negative} of our $g(\beta)$.
\end{proof}
Hence $\mathbb{E}_{\rvw \sim \pi_{\beta}} \left[-U'(\rvw) \right]$ is a monotonically decreasing function, and this shows that calculating the left sums upper bounds our integral. In fact, we can show more. Below we show that $g(\beta)$ is convex, hence the trapezium rule can also be used to upper bound this integral. 

\begin{restatable}{prop}{convexityprop}
\label{convexityprop}
Let $g(\beta)=\mathbb{E}_{\rvw \sim \pi_{\beta}} \left[-U'(\rvw) \right]=\mathbb{E}_{\rvw \sim \pi_{\beta}} \left[n \lambda \widetilde{L}^{\mathrm{ce}}_S(\mathbf{w})\right]$.
Then we have that $\frac{\partial^2 g(\beta)}{\partial \beta^2} \geq 0$, hence $g(\beta)$ is convex.
\end{restatable}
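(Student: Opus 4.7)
The plan is to bootstrap from Proposition~\ref{paperprop}, which gives $g'(\beta) = -\mathrm{Var}_{\pi_\beta}[U'(\rvw)]$, and obtain $g''(\beta)$ by one further differentiation in $\beta$. Writing $V(\rvw) := -U'(\rvw) = n\lambda \widetilde{L}_S^{\mathrm{ce}}(\rvw)$, the tempered path $\pi_\beta \propto e^{-\beta V(\rvw)} p(\rvw)$ is a one-parameter exponential family in the natural parameter $-\beta$ with sufficient statistic $V$, so convexity of $g$ on $[0,1]$ is equivalent to the map $\beta \mapsto \mathrm{Var}_{\pi_\beta}[V]$ being non-increasing there.

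The first technical step is a standard dominated-convergence argument: because $V$ is bounded (the paper works with the bounded cross-entropy $\widetilde{l^{\mathrm{ce}}}$), differentiation under the integral sign is legitimate and yields the exponential-family identity
\begin{equation*}
\frac{d}{d\beta}\mathbb{E}_{\pi_\beta}[f(\rvw)] = -\mathrm{Cov}_{\pi_\beta}\bigl[f(\rvw),\,V(\rvw)\bigr]
\end{equation*}
for any bounded $f$. Applying this separately to $\mathbb{E}_{\pi_\beta}[V^2]$ and $(\mathbb{E}_{\pi_\beta}[V])^2$ inside $\mathrm{Var}_{\pi_\beta}[V]$, and collecting terms via the expansion $\mathbb{E}[(V-\mathbb{E}V)^3] = \mathbb{E}[V^3] - 3\,\mathbb{E}V\cdot\mathbb{E}[V^2] + 2(\mathbb{E}V)^3$, collapses the arithmetic to the clean identity
\begin{equation*}
g''(\beta) = \mathbb{E}_{\pi_\beta}\bigl[(V - \mathbb{E}_{\pi_\beta}V)^3\bigr],
\end{equation*}
i.e.\ $g''(\beta)$ is the third cumulant of $V$ under $\pi_\beta$. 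Equivalently, letting $W_\beta = \int e^{-\beta V}p\,\mathrm{d}\rvw$, one has $g(\beta) = -\partial_\beta \log W_\beta$ and hence $g''(\beta) = -\partial_\beta^3 \log W_\beta$, which is the third derivative of the log-partition function along an exponential family and is known to equal the third cumulant of the sufficient statistic.

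The main obstacle is then non-negativity of this third cumulant on $\beta\in[0,1]$: the third central moment is not sign-definite in general, so the proof must use something specific to the tempered family. The route I would try first is direct: $V\geq 0$ is bounded, and tempering by $e^{-\beta V}$ with $\beta\geq 0$ increasingly concentrates mass on small values of $V$ while still allowing a right tail above $\mathbb{E}_{\pi_\beta}V$, which heuristically produces a right-skewed distribution of $V$. A more formal attempt rewrites $g''(\beta) = \mathrm{Cov}_{\pi_\beta}\bigl[V,\,(V-\mathbb{E}V)^2\bigr]$ and aims for a Chebyshev-/FKG-type association inequality, exploiting the piecewise monotonicity of $v\mapsto (v-\mathbb{E}V)^2$ about the mean; this would have to be coupled with a positive-correlation property of $\pi_\beta$ such as log-concavity of its density, which does hold in the experiments because the prior $p(\rvw)$ is a factorised Gaussian. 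If neither route closes the gap unconditionally, I would add such a hypothesis to the statement explicitly, rather than rely on a purely empirical verification of the third-moment sign.
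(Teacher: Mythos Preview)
Your identification of $g''(\beta)$ with the third central moment $\mathbb{E}_{\pi_\beta}\bigl[(V-\mathbb{E}_{\pi_\beta}V)^3\bigr]$, where $V=n\lambda\widetilde{L}_S^{\mathrm{ce}}(\rvw)$, is correct and coincides with the paper's expression $A+B=-2(\mathbb{E}[U'])^3-\mathbb{E}[(U')^3]+3\,\mathbb{E}[U']\,\mathbb{E}[(U')^2]$ after substituting $U'=-V$. The divergence between your attempt and the paper's is only in the last step: the paper claims $A+B\geq 0$ via two Jensen inequalities, whereas you flag the sign of the third cumulant as the crux and leave it open.

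The paper's Jensen argument does not in fact close this gap. The first application, $(\mathbb{E}[U'])^3\geq\mathbb{E}[(U')^3]$ from concavity of $x\mapsto x^3$ on $(-\infty,0]$, is valid and yields $A+B\geq -3(\mathbb{E}[U'])^3+3\,\mathbb{E}[U']\,\mathbb{E}[(U')^2]=3\,\mathbb{E}[U']\cdot\mathrm{Var}_{\pi_\beta}[U']$. But $\mathbb{E}[U']\leq 0$, so this lower bound is already non-positive; the paper's second step, replacing $\mathbb{E}[(U')^2]$ by $(\mathbb{E}[U'])^2$ via convex Jensen, flips direction once multiplied by the negative coefficient $3\,\mathbb{E}[U']$, and the final displayed inequality is therefore unjustified. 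Your hesitation is well placed: the third central moment of a bounded non-negative random variable is not sign-definite. For instance, if under $\pi_0=P$ the law of $V$ were Bernoulli$(q)$ one gets $g''(0)=q(1-q)(1-2q)<0$ whenever $q>\tfrac12$, and any left-skewed continuous law of $V$ under the prior produces the same sign. So neither the paper's argument nor the proposition as stated survives unconditionally; your instinct to add a structural hypothesis guaranteeing non-negative skewness of $V$ under each $\pi_\beta$ is the correct way to repair it.
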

\begin{proof}
We make use of the fact that $\frac{\partial }{\partial \beta}\mathbb{E}_{\rvw \sim \pi_{\beta}} \left[-U'(\rvw) \right] =-\text{Var}_{\rvw \sim \pi_{\beta}} \left[U'(\rvw) \right]$, proved in \cite{TI} (note that they define $g(\beta)$ to be the negative of our $g(\beta)$, hence the two statements differ by a minus sign). Plugging this in and expanding the variance, we obtain
\begin{flalign}
    \frac{\partial^2 g(\beta)}{\partial \beta^2} &= \frac{\partial^2 }{\partial \beta^2}\mathbb{E}_{\rvw \sim \pi_{\beta}} \left[-U'(\rvw) \right]\\
    &=-\frac{\partial}{\partial \beta} \left[ \mathbb{E}_{\rvw \sim \pi_{\beta}} \left[(U'(\rvw))^2 \right] - \mathbb{E}_{\rvw \sim \pi_{\beta}} \left[U'(\rvw) \right]^2\right]\\
    &=-\frac{\partial}{\partial \beta} \mathbb{E}_{\rvw \sim \pi_{\beta}} \left[(U'(\rvw))^2 \right] +2 \mathbb{E}_{\rvw \sim \pi_{\beta}} \left[U'(\rvw) \right]\frac{\partial}{\partial \beta}\mathbb{E}_{\rvw \sim \pi_{\beta}} \left[U'(\rvw) \right]\\
    &= A + B.
\end{flalign}
For $B$ we can use the above, i.e.
\begin{flalign}
    B
    &= 2\mathbb{E}_{\rvw \sim \pi_{\beta}} \left[U'(\rvw) \right]\text{Var}_{\rvw \sim \pi_{\beta}} \left[U'(\rvw) \right] \\
    &= 2\mathbb{E}_{\rvw \sim \pi_{\beta}} \left[U'(\rvw) \right] \mathbb{E}_{\rvw \sim \pi_{\beta}} \left[(U'(\rvw))^2 \right] -2\mathbb{E}_{\rvw \sim \pi_{\beta}} \left[U'(\rvw) \right]^3.
\end{flalign}
For $A$, note that $U'(\rvw)$ is \textit{independent of} $\beta$, hence we can directly plug into equation (30) in \cite{TI}, noting that we have an extra $U'(\rvw)$ term and we need the negative of their expression:
\begin{flalign}
    A &= \int (U'(\rvw))^2 \pi_{\beta}(\rvw)\mathbb{E}_{\rvw \sim \pi_{\beta}(\rvw)} \left[U'(\rvw) \right]  \mathrm{d}\rvw - \int (U'(\rvw))^3 \pi_{\beta}(\rvw)\mathrm{d}\rvw\\
    &=\mathbb{E}_{\rvw \sim \pi_{\beta}} \left[U'(\rvw) \right] \mathbb{E}_{\rvw \sim \pi_{\beta}} \left[(U'(\rvw))^2 \right] - \mathbb{E}_{\rvw \sim \pi_{\beta}} \left[(U'(\rvw))^3 \right].
\end{flalign}

Hence $A+B$ is
\begin{flalign}
    A+B = -2\mathbb{E}_{\rvw \sim \pi_{\beta}} \left[U'(\rvw) \right]^3 - \mathbb{E}_{\rvw \sim \pi_{\beta}} \left[(U'(\rvw))^3 \right] + 3\mathbb{E}_{\rvw \sim \pi_{\beta}} \left[U'(\rvw) \right] \mathbb{E}_{\rvw \sim \pi_{\beta}} \left[(U'(\rvw))^2 \right].
\end{flalign}
Since $U'(\rvw)=-n\lambda \hat{L}_S(\rvw)$, it is negative. The function $x \mapsto x^3$ is concave on $(-\infty, 0)$, while the function $x \mapsto x^2$ is convex. Hence Jensen's inequality gives:
\begin{equation}
    \mathbb{E}_{\rvw \sim \pi_{\beta}} \left[U'(\rvw) \right]^3 \geq \mathbb{E}_{\rvw \sim \pi_{\beta}} \left[(U'(\rvw))^3 \right]
\end{equation}
\begin{equation}
    \mathbb{E}_{\rvw \sim \pi_{\beta}} \left[U'(\rvw) \right]^2 \leq \mathbb{E}_{\rvw \sim \pi_{\beta}} \left[(U'(\rvw))^2 \right].
\end{equation}
Thus,
\begin{flalign}
    A+B=\frac{\partial^2 g(\beta)}{\partial \beta^2} & \geq
    3 \left[-\mathbb{E}_{\rvw \sim \pi_{\beta}} \left[U'(\rvw) \right]^3   + \mathbb{E}_{\rvw \sim \pi_{\beta}} \left[U'(\rvw) \right] \mathbb{E}_{\rvw \sim \pi_{\beta}} \left[U'(\rvw) \right]^2 \right]\\
    &= 3 \left[- \mathbb{E}_{\rvw \sim \pi_{\beta}} \left[U'(\rvw) \right]^3  + \mathbb{E}_{\rvw \sim \pi_{\beta}} \left[U'(\rvw) \right]^3 \right]=0,
\end{flalign}
thus $g(\beta)$ is convex.
\end{proof}
\subsubsection{High-probability bounds}
\label{high-prob_bounds_section}
\paragraph{The kl inverse bound}
To ensure approximate independence, the bounds were calculated on a thinned version of the Gibbs samples, which ensured that the ESS of the thinned sample is close to the remaining sample size. This resulted in retaining 1000-3000 samples out of 19800, depending on the dataset. For comparison, MFVI was also evaluated using the same amount of (exact) Gaussian samples.  
\paragraph{The asymptotic bound}
The Markov chain Central Limit Theorem (MCCLT) provides a confidence interval, but the conditions of this theorem are very strong. Fortunately, \cite{clt_stuff} provide a similar, but weaker confidence interval that does not require the MCCLT. The assumptions on our estimator $\frac{1}{m}\sum_{i=1}^m \widetilde{L}^{\mathrm{ce}}_S(\rvw_i)$, $\rvw_i \sim Q$ are as follows.
\begin{enumerate}
    \item $O\left(\frac{1}{n}\right)$ variance of the estimator
    \item Bias of order smaller than $O\left(\frac{1}{\sqrt{n}}\right)$
    \item An estimator $\hat{\sigma}_m$ for the \textit{asymptotic variance} $\sigma:=\lim_{m \to \infty}{m \mathrm{Var}_{\rvw \sim Q}\left[\widetilde{L}^{\mathrm{ce}}_S(\rvw)\right]}$ that converges in probability.
\end{enumerate}

\noindent Assumptions 1. and 2. are typical in MCMC applications \citep{clt_stuff}. For assumption 3, we estimate the asymptotic variance using the fact that the MCMC standard error converges to $\frac{\sigma}{\sqrt{m}}$, following one of the suggestions of \cite{clt_stuff}. The (one-sided) version on the confidence interval in \cite{clt_stuff} has form
\begin{equation}
    I_{m, \epsilon}=\left[0, \frac{1}{m}\sum_{i=1}^m \hat{L}_S(\rvw_i) + m^{-\frac{1}{2}}\hat{\sigma}_m \left( 2\alpha\right)^{-\frac{1}{2}}(1+ \epsilon) \right) \quad \text{with prob. at least } 1-\alpha,
\end{equation}
where $m$ is the number of samples, $\hat{\sigma}_m$ is an estimate of the asymptotic variance, and $\epsilon$ appears in the proof when formalizing Assumption 3. We take $\epsilon=0.01$. Fair comparison to MFVI demands that we use the classical CLT to obtain an asymptotic confidence interval in this case. This is valid as both the transformed cross-entropy and the $0-1$ losses are bounded. The CLT confidence interval has form $I_{\alpha}=\left[0,  \frac{1}{m}\sum_{i=1}^m \hat{L}_S(\rvw_i) + q_{\alpha}\frac{\hat{\sigma}_m}{\sqrt{n}} \right)$, where $q_{\alpha}$ is the appropriate quantile of the standard Gaussian distribution\footnote{Although the population variance is unknown, we decided to use this instead of the Student-t distribution to ensure comparability to $I_{m, \epsilon}$, and because we have a large sample size, $m=20,000$ across the four chains.}.

\paragraph{The naive bound}
Given the difficulties in verifying assumptions for MCMC samples, the reader may wonder if we can give, perhaps looser, estimates for $C(Q^*_{\lambda}, \lambda)$ with milder assumptions. Let $Q$ denote the underlying distribution of our HMC samples, which may not be Gibbs posterior. The following simple proposition can be used to give a (loose) bound for $\mathrm{KL}(Q||P)$ only requiring that $Q$ is not much further from the Gibbs posterior than the mean-field approximation. We expect that $Q$ is much closer to the Gibbs posterior than MFVI, hence this is a very mild assumption.

\begin{prop}
\label{sillyprop}
    Let $P$ be the prior, and let $Q^*_{\lambda}$ denote the corresponding (Gibbs) posterior $Q^*_{\lambda} \propto e^{-n\lambda \widetilde{L}_S^{\mathrm{ce}}(\rvw)}p(\rvw)$. Suppose that we are able to simulate from a distribution $Q$. Let $G$ be another distribution (in our case, a Gaussian) such that $\mathrm{KL}(Q||Q^*_{\lambda}) \leq \mathrm{KL}(G||Q^*_{\lambda}) + n\mathbb{E}_{Q}\left[\widetilde{L}_S^{\mathrm{ce}}(\rvw) \right] $. Then,
\begin{equation}
    \mathrm{KL}(Q||P) \leq n \lambda \mathbb{E}_{G}\left[\widetilde{L}_S^{\mathrm{ce}}(\rvw)\right]  + \mathrm{KL}(G||P).
\end{equation}
\end{prop}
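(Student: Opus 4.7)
The plan is to leverage the explicit form of the Gibbs posterior $Q^*_{\lambda}$ to convert KL divergences with respect to the prior $P$ into KL divergences with respect to $Q^*_{\lambda}$, up to an additive log-partition function term that will cancel when we subtract two such identities. The assumption in the proposition is precisely tailored so that, after this cancellation, the remaining error term collapses into the desired form.

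Concretely, I would first write out the log-density of the Gibbs posterior,
\begin{equation}
\log q^*_{\lambda}(\rvw) \;=\; -n\lambda\,\widetilde{L}_S^{\mathrm{ce}}(\rvw) + \log p(\rvw) - \log Z,
\end{equation}
where $Z = \mathbb{E}_{\rvw \sim P}\bigl[e^{-n\lambda \widetilde{L}_S^{\mathrm{ce}}(\rvw)}\bigr]$. Taking expectations under an arbitrary distribution $M$ and rearranging gives the key identity
\begin{equation}
\mathrm{KL}(M\|Q^*_{\lambda}) \;=\; \mathrm{KL}(M\|P) + n\lambda\,\mathbb{E}_{M}\bigl[\widetilde{L}_S^{\mathrm{ce}}(\rvw)\bigr] + \log Z.
\end{equation}
This is the only nontrivial algebraic step; everything else is bookkeeping.

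Next I would apply this identity twice, once with $M = Q$ and once with $M = G$, and subtract to eliminate the unknown constant $\log Z$:
\begin{equation}
\mathrm{KL}(Q\|P) - \mathrm{KL}(G\|P) \;=\; \mathrm{KL}(Q\|Q^*_{\lambda}) - \mathrm{KL}(G\|Q^*_{\lambda}) - n\lambda\bigl(\mathbb{E}_Q[\widetilde{L}_S^{\mathrm{ce}}] - \mathbb{E}_G[\widetilde{L}_S^{\mathrm{ce}}]\bigr).
\end{equation}
Now I would invoke the hypothesis $\mathrm{KL}(Q\|Q^*_{\lambda}) - \mathrm{KL}(G\|Q^*_{\lambda}) \leq n\,\mathbb{E}_{Q}[\widetilde{L}_S^{\mathrm{ce}}(\rvw)]$ to bound the first difference on the right-hand side. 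Substituting (and using $\lambda = 1$ as fixed in the experiments, per the note on notation at the start of the supplement) the $\mathbb{E}_Q[\widetilde{L}_S^{\mathrm{ce}}]$ terms cancel, leaving precisely $n\lambda\,\mathbb{E}_G[\widetilde{L}_S^{\mathrm{ce}}(\rvw)]$ on the right. Adding $\mathrm{KL}(G\|P)$ to both sides then yields the claimed inequality.

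The main obstacle is really conceptual rather than technical: one must notice that writing $\mathrm{KL}(\cdot\|P)$ as $\mathrm{KL}(\cdot\|Q^*_{\lambda})$ minus the risk term plus $\log Z$ lets the intractable normalizer vanish in a pairwise comparison. Once that trick is in hand, the rest is a one-line rearrangement. I would be careful to remark that the argument uses only non-negativity of the involved KL divergences implicitly through the assumption (not as a separate bound), and that the $\lambda = 1$ convention of the supplement reconciles the $n$ vs.\ $n\lambda$ coefficients appearing in the hypothesis and the conclusion.
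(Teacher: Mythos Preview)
Your argument is correct and is essentially the same as the paper's: both derive the identity $\mathrm{KL}(M\|Q^*_{\lambda}) = \mathrm{KL}(M\|P) + n\lambda\,\mathbb{E}_{M}[\widetilde{L}_S^{\mathrm{ce}}(\rvw)] + \log Z$ (the paper via Proposition~\ref{kl_eq}), apply it to $M=Q$ and $M=G$, and combine so that $\log Z$ drops out before invoking the hypothesis. Your observation that the $n$ vs.\ $n\lambda$ mismatch in the hypothesis and conclusion is resolved by the paper's convention $\lambda=1$ is also exactly right; the paper's own proof in fact silently uses $n\lambda$ in place of $n$ when invoking the assumption.
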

\begin{proof}
    We defer the proof until the end of this section.
\end{proof}
Let us denote our bound objectives as $C(Q, \lambda)$.
We can use this estimate to obtain an upper bound on $C(Q, \lambda)$, which we denote by $C^{\mathrm{UB}}(Q, \lambda)$. For this, we need an estimate for $\widetilde{L}_S^{\mathrm{ce}}(Q)$, for which we use the kl inverse bound (Theorem \ref{caruana_theorem}). It can be shown easily that we have $C^{\mathrm{UB}}(Q, \lambda) \geq C(G,\lambda)$ if we use the $\lambda$ bound and the cross-entropy loss. However, we are most interested in risk certificates in terms of accuracy. For the $0-1$ loss, our upper bound on the $\lambda$ bound becomes

\begin{equation}
    L^{\text{0-1}}(Q) \leq \frac{\hat{L}_S^{\text{0-1}}(Q)}{1-\frac{\lambda}{2}} + \frac{\mathrm{KL}(G||P) + \log (2 \sqrt{n}/\delta)}{n \lambda (1-\frac{\lambda}{2})} + \frac{\widetilde{L}^{\mathrm{ce}}_S(G)}{1-\frac{\lambda}{2}}=:C^{\mathrm{UB}}(Q, \lambda).
\end{equation}
This can result in $C^{\mathrm{UB}}(Q, \lambda) \leq C(G,\lambda)$ if $Q$ has lower $0-1$ loss than $G$. The difference between this and the true $\lambda$ bound on $L^{\text{0-1}}(Q)$ is precisely $\mathrm{KL}(G||Q^*_{\lambda})-\mathrm{KL}(Q||Q^*_{\lambda})+\widetilde{L}^{\mathrm{ce}}_S(Q)$. We will use this bound for a sanity check on our results. If our computed value for $C(Q, \lambda)$ is much higher than this bound, then we likely overestimated the true value, even if our bound is smaller than the one with MFVI. If, on the contrary, our estimate is much lower than this upper bound, that means that either $\mathrm{KL}(G||Q^*_{\lambda})-\mathrm{KL}(Q||Q^*_{\lambda})+\widetilde{L}^{\mathrm{ce}}_S(Q)$ is high or our chains did not even manage to achieve $\mathrm{KL}(Q||Q^*_{\lambda})\leq \mathrm{KL}(G||Q^*_{\lambda})+\widetilde{L}^{\mathrm{ce}}_S(Q)$.

\textit{Proof of Proposition \ref{sillyprop} }
We use Proposition \ref{kl_eq} with $P$ having density $p(\rvw)$, $G$ having density $q(\rvw | z)$ and $Q^*_{\lambda}$ having density $p(\rvw |z)$ and $Z=\mathbb{E}_{P}\left[e^{-n\lambda \widetilde{L}_S^{\mathrm{ce}}(\rvw)} \right]$ being the marginal likelihood. We get
    \begin{equation}
        \mathrm{KL}(G||P) = -n \lambda \mathbb{E}_{G}\left[\widetilde{L}_S^{\mathrm{ce}}(\rvw) \right] - \log(Z) + \mathrm{KL}(G||Q^*_{\lambda}).
\label{furaeq}
    \end{equation}
Reordering this, we obtain an estimate for $-\log(Z)$:
\begin{equation}
        - \log(Z) = \mathrm{KL}(G||P) - \mathrm{KL}(G||Q^*_{\lambda})  +n \lambda \mathbb{E}_{G}\left[\widetilde{L}_S^{\mathrm{ce}}(\rvw) \right].
\end{equation}
This estimate can be used to calculate $\mathrm{KL}(Q||P)$, invoking Proposition \ref{kl_eq} again.
\begin{equation}
    \begin{split}
        \mathrm{KL}(Q||P) &= n \lambda \mathbb{E}_{G}\left[\widetilde{L}_S^{\mathrm{ce}}(\rvw) \right] + \mathrm{KL}(G||P)\\
    &+ \left(  -n\lambda \mathbb{E}_{Q}\left[\widetilde{L}_S^{\mathrm{ce}}(\rvw) \right] +\mathrm{KL}(Q||Q^*_{\lambda}) - \mathrm{KL}(G||Q^*_{\lambda})\right)
    \end{split}
\end{equation}
Since we assumed that $\mathrm{KL}(Q||Q^*_{\lambda}) \leq \mathrm{KL}(G||Q^*_{\lambda}) +   n\lambda \mathbb{E}_{Q}\left[\widetilde{L}_S^{\mathrm{ce}}(\rvw) \right]$, the last term is negative. By ignoring it, we thus obtain an upper bound on $\mathrm{KL}(Q||P)$.

\noindent We note that for $G \equiv P$, Equation \ref{furaeq} becomes Jensen's inequality for the function $-\log$, i.e.,
\begin{equation}
    -\log \mathbb{E}_{G}\left[e^{-n\lambda \widetilde{L}_S^{\mathrm{ce}}(\rvw)} \right] \leq  -\mathbb{E}_{G}\left[\log\left(e^{-n\lambda\widetilde{L}_S^{\mathrm{ce}}(\rvw) }\right)\right].
\end{equation}

\subsection{Implementation details}
\label{implementation_section}
\subsubsection{Transforming the cross-entropy loss}
\label{trans_loss}
PAC-Bayes bounds assume a loss function bounded in $[0, 1]$. However, the cross-entropy loss is unbounded from above. Hence we transform $l^{\mathrm{ce}}(\rvw, y)=\sum_{i=1}^k y_i \log(p_i) $ as follows.
\begin{equation}
\label{pmin}
    \widetilde{l^{\mathrm{ce}}}(\rvw, y) := \frac{-\sum_{i=1}^k y_i \log(\max(p_i, p_{\text{min}}))}{-\log (p_{\text{min}})},
\end{equation}
with some $p_{\text{min}} >0$, which now falls into $[0, 1]$. We will take $p_{\text{min}}=10^{-4}$. The corresponding risk functionals will be denoted by $\widetilde{L}^{\mathrm{ce}}(Q)$ and $\widetilde{L}_S^{\mathrm{ce}}(Q)$.

\subsubsection{Estimating the kl bound}
\label{estim_kl_bound}
After arriving at a posterior distribution $Q$ either by sampling from the Gibbs posterior or MFVI, we wish to compute a risk certificate on the \textit{error} (0-1 loss) $L^{0-1}(Q)$ of the stochastic predictor given by $Q$. The transformed cross-entropy loss is only used for sampling. To compute the risk certificate, we use the kl bound (Equation \ref{pac_kl}) since it is the tightest. To invert the Bernoulli KL, we define
\begin{equation}
    \mathrm{kl}^{-1}(x, b):=\sup \{y \in [x, 1]: \mathrm{kl}(x||y)\leq b \}.
\end{equation}
This can be seen as a proper definition of the inverse Bernoulli KL. Our RC is then
\begin{equation}
    L^{0-1}(Q) \leq \mathrm{kl}^{-1} \left(\hat{L}^{0-1}_S(Q), \frac{\mathrm{KL}(Q||P) + \log (\frac{2 \sqrt{n}}{\delta})}{n} \right).
\end{equation}
\subsubsection{Architectures}
\label{architectures}
We use simple Multi-Layer Perceptron (MLP) architectures in our sampling experiments. Due to the scalability limits of marginal likelihood estimation, the number of parameters are kept small. The architectures along with the corresponding number of parameters are given in Table \ref{architectures_tab}.
\begin{table}[h!]
\caption{Our neural network architectures.}
\centering
\begin{tabular}{@{}lllllc@{}}
\toprule
Dataset              & \multicolumn{4}{l}{Architecture} & \multicolumn{1}{l}{$\#$ params} \\ \midrule
Binary MNIST         & 784     & 20     & -      & 2     & 15742                           \\
$14 \times 14$ MNIST & 196     & 128    & 128    & 10    & 43018                           \\
MNIST                & 784     & 128    & 128    & 10    & 118282                          \\ \bottomrule
\end{tabular}
\label{architectures_tab}
\end{table}
\subsubsection{Hyperparameters}
\label{hyperparam}
\paragraph{Prior hyperparameters} We initialize the prior means randomly from a truncated Gaussian distribution of mean 0 and separate variance for each layer, given by $\frac{1}{\sqrt{n_{\text{in}}}}$, where $n_{\text{in}}$ is the input dimensionality of the layer. The constants of the truncation are $\pm 2$ standard deviations. The prior covariance is set to $0.03 \cdot \mathrm{I}$, matching \cite{tighter}.
We use the cross-entropy loss as our generalized likelihood function, in the transformed version (Equation \ref{pmin}), with $p_{\mathrm{min}}=10^{-4}$. This ensures that our loss is bounded. 

\paragraph{MFVI hyperparameters} As optimizer, we used SGD with learning rate $0.005$ and momentum $0.95$ and train for 150 epochs. In the case of MFVI, the $\lambda$ bound objective can be easily optimized in $\lambda$ too. To match our proposed method, we fix $\lambda=1$ for MFVI too. 

We ran experiments with optimizing for $\lambda$ too, but the best RCs were achieved when $\lambda$ changed very little during the training (at most $0.02$ across our experiments). For these experiments, we followed \cite{tighter} in optimizing $\lambda$ separately using SGD with the same learning rate and momentum used for MFVI. We have run initial experiments with higher learning rates for $\lambda$, but these resulted in looser risk certificates using the kl bound.

We choose our risk certificates to hold with probability $1-\delta$ with $\delta=0.05$. We use $\delta_1=0.025$ to upper bound $\frac{1}{m}\sum_{i=1}^mL_S(\rvw_i)$, $\rvw_i \sim Q$ and then $\delta_2=0.025$ to compute bounds. This ensures a risk certificate that holds with probability $0.95$, via a union bound argument.

\begin{figure}[t]
    \centering
    \includegraphics[width=\textwidth]{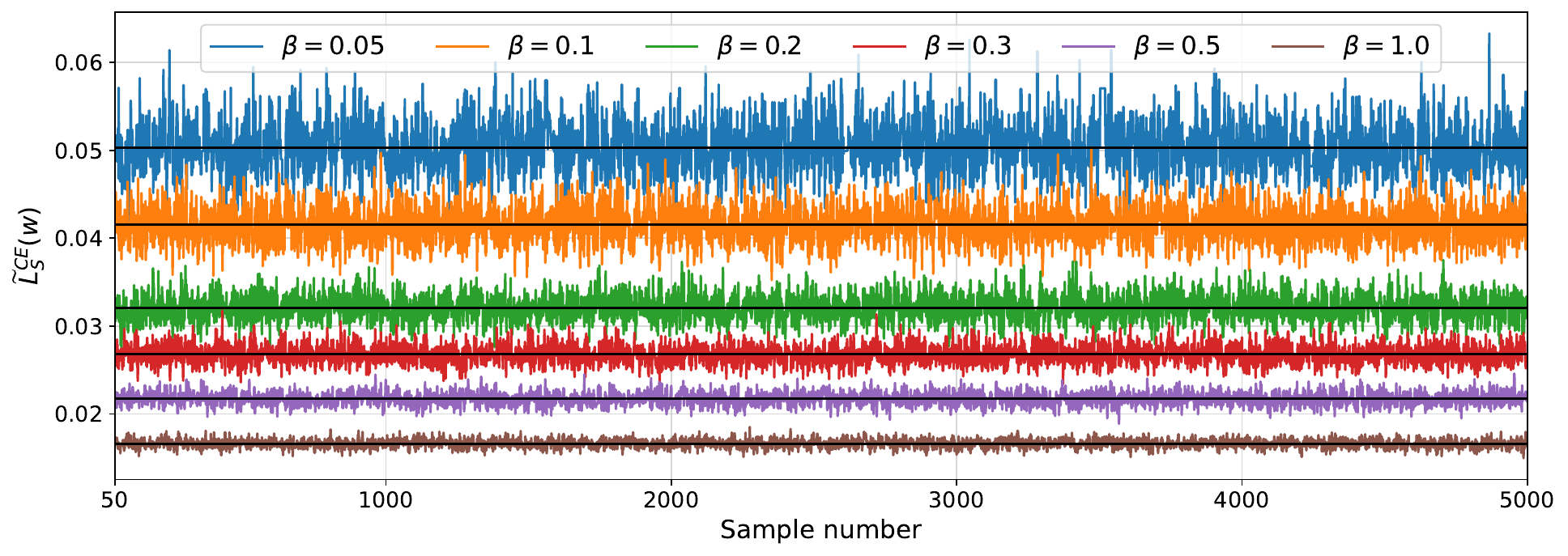}
    \caption[HMC sample traces]{HMC sample traces for a single chain without burn-in (50), for various $\beta$ values. The black lines show the mean of each chain. The dataset is Binary MNIST - Half.}
    \label{tracefig}
\end{figure}

\paragraph{HMC hyperparameters}  We use full-batch HMC to sample from Gibbs posteriors of form $\propto e^{-\beta \widetilde{L}^{\mathrm{ce}}_S(\rvw)}p(\rvw)$ for $\beta \in (0, 1)$ using the JAX HMC implementation of \cite{izmailov}. We run HMC with Metropolis-Hastings correction. We use constant step-sizes for the discretization of Hamiltonian dynamics, which we calibrate individually by testing the values $\{2, 3, 4, 5, 6, 7, 8, 9, 10, 20, 30\}\times 10^{-3}$ for each dataset and $\beta$. The step-sizes were chosen from this set by running HMC for around $100$ sampling iterations for each step size, targeting the ideal acceptance probability for HMC given by $0.65$ \citep{optimal_hmc}. Longer trial runs were not possible due to computational constraints. Lower step sizes were found suitable for higher $\beta$ values.  In every experiment, we use a trajectory length of $1.5$. The authors of \cite{izmailov} recommend a trajectory length of at least $\frac{\pi \sigma_{\mathrm{prior}}}{2}$, which is approximately $0.27$ in our case, but we choose higher than this in order to allow for more leapfrog steps. The chosen step-sizes resulted in leapfrog steps between $50$ and $750$. The difference in leapfrog steps is motivated by the fact that for lower $\beta$ values, the generalized posterior is more similar to the prior and hence is easier to sample from. Each chain was run for $5000$ iterations including a burn-in of $50$ iterations. A low burn-in was chosen since we observed that the chains converge very quickly in function space ($\widetilde{L}^{\mathrm{ce}}_S(\rvw)$). To support this choice, Figure \ref{tracefig} shows burned-in chains for the Binary MNIST - Half dataset for various values of $\beta$. To be able to test the convergence of the chains, we run four chains started from independent seeds for each $\beta$. 

\subsection{Diagnostics}
This section contains the full description and results of our diagnostic analysis on the HMC samples.
\label{further_diagnostics_Section}
\subsubsection{Diagnostic measures}
 Let $(\mathbf{X}^{(t)})_{t=1}^{\infty}$ be a Markov chain on state space $\mathcal{A}$, whose initial distribution equals its stationary distribution, hence $X_1, X_2, ...$ are identically distributed. Consider a square-integrable function $f: \mathcal{A} \to \mathbb{R}$, whose expected value $\mathbb{E}_{P_{X_1}}[f(X_1)]$ we wish to estimate. Note that square integrability impies $\text{Var}[f(X_1)] \leq \infty$. 
\paragraph{Effective Sample Size} First, we define the effective sample size (ESS), which quantifies the loss of information caused by correlation. 
\begin{definition}(Effective Sample Size)
    The Effective sample size of $(\mathbf{X}^{(t)})_{t=1}^N$ is given by
    \begin{equation}
        \mathrm{ESS}[f]=\frac{N}{1 + 2 \sum_{\tau=1}^{\infty} \rho(\tau)},
    \end{equation}
    where $\rho(\tau)=\mathrm{Cor}(f(X_1), f(X_{1+\tau}))$ denotes the autocorrelation at lag $\tau$.
\end{definition}
The ESS is typically lower than the number of samples, indicating the presence of positive correlations. In MCMC, where states are typically positively correlated, an ESS equaling the number of samples signals uncorrelated samples.
\paragraph{MCMC standard error}
\begin{definition}(MCMC standard error)
    The MCMC standard error of $(\mathbf{X}^{(t)})_{t=1}^N$ is
    \begin{equation}
        \mathrm{MCMC}\_\mathrm{SE}[f]=\sqrt{\frac{\mathrm{Var}[f(X_1)]}{\mathrm{ESS}[f]}}.
    \end{equation}
\end{definition}
The MCMC SE measures the concentration of a sample mean around the true mean, in the sense of the Markov chain Central Limit theorem (MCCLT). It requires the above conditions on the Markov chain, namely (i) stationary chain and (ii) square-integrable $f$ and states that, for large $N$, approximately
\begin{equation}
    \frac{1}{N}\sum_{i=1}^N f(X_i) \sim \mathcal{N}(\mathbb{E}_{P_{X_1}}[f(X_1)], \mathrm{MCMC\_SE}[f]).
\end{equation}
In practice, we don't have access to  $\rho(\tau)$ and $\mathrm{Var}[f(X_1)]$, and estimate them from a finite set of samples. 

\paragraph{$\hat{R}$ statistic} Since ESS requires stationarity to be well-defined, we will also use the $\hat{R}$ (known also as potential scale reduction) to check convergence by comparing multiple independent Markov chains \citep{r_hat}. $\hat{R}$ measures how much the variance of the means between multiple chains exceeds that of identically distributed chains. We have that $\hat{R} \geq 1$, where $\hat{R} = 1$ means perfect convergence. For more discussion on $\hat{R}$, please refer to \cite{r_hat}, and for general discussion on MCMC and convergence diagnostics, see \cite{MCMC_practice} and \cite{handbook_MCMC}.

\paragraph{Expected Calibration Error} Following \cite{izmailov}, we supply expected calibration error (ECE) estimates averaged over all estimates for a given $\beta$. ECE measures model calibration by quantifying how well a model's output pseudo-probabilities match the true (observed) probabilities \citep{ece}. 
\label{mcmc_section_diagnostics}

\subsubsection{Diagnostic results}
Figures \ref{bin_half_diag_figure} \ref{bin_full_diag_figure}, \ref{small_half_diag_figure} \ref{small_full_diag_figure}, \ref{half_diag_figure}, \ref{full_diag_figure} show diagnostic results for all datasets, respectively. The target acceptance probability of $0.65$ was not always achieved on average. On Binary MNIST, acceptance probabilities were higher, around $70-80\%$. In these cases, increasing the step-size resulted in unstable behaviour, with an acceptance probability close to $0$. We followed the usual recommendation for these cases, which is to keep the step-size slightly lower than the highest stable value \citep{hmc_conceptual}. On $14 \times 14$ MNIST, acceptance probabilities were roughly as desired.

\begin{figure}[t]
    \centering

\begin{subfigure}
         \centering
    \includegraphics[width=0.95\textwidth]{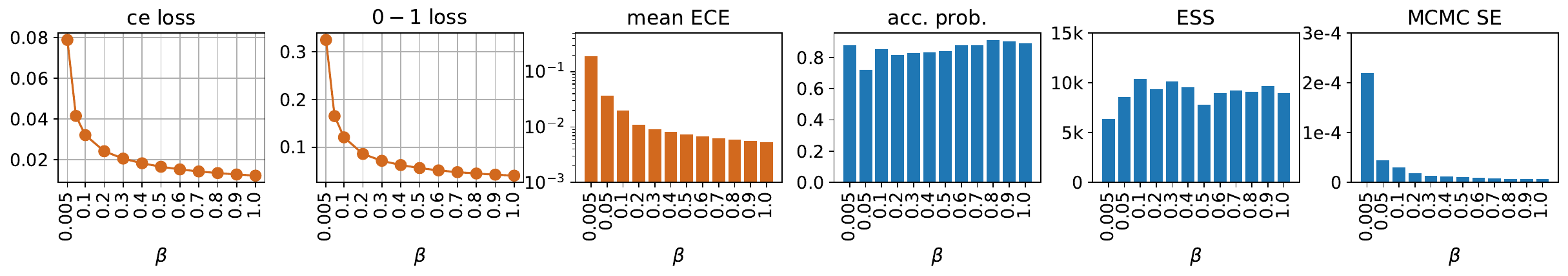}
\end{subfigure}
\begin{subfigure}
         \centering
    \includegraphics[width=0.95\textwidth]{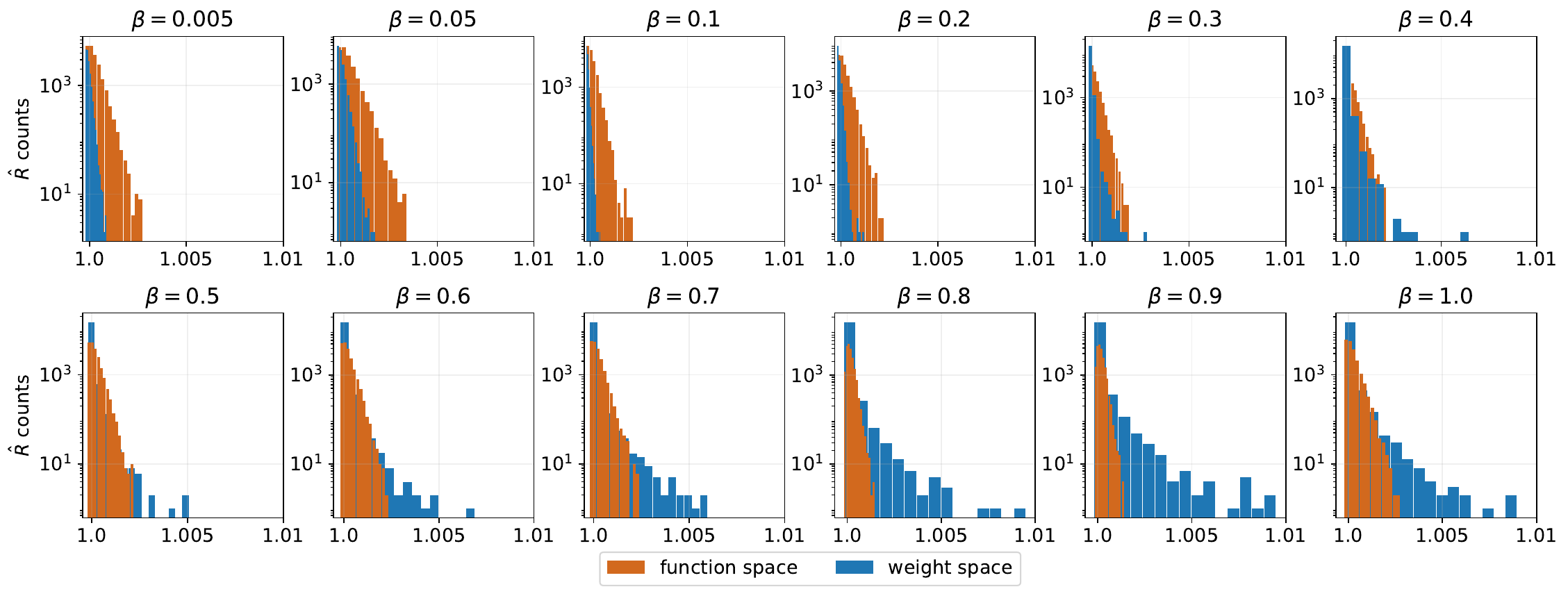}
\end{subfigure}
\caption{HMC diagnostics for Binary MNIST - half, over 4 chains of length 5000, for each $\beta$.}
\label{bin_half_diag_figure}
\end{figure}
\begin{figure}[h!]
    \centering
    %\vspace{1em}
\begin{subfigure}
         \centering
    \includegraphics[width=0.95\textwidth]{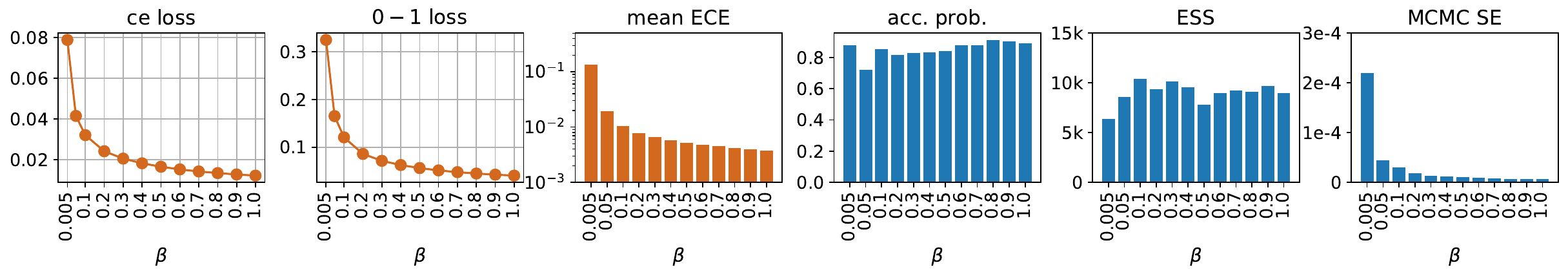}
\end{subfigure}
\begin{subfigure}
         \centering
    \vspace{-0.5em}
    \includegraphics[width=0.95\textwidth]{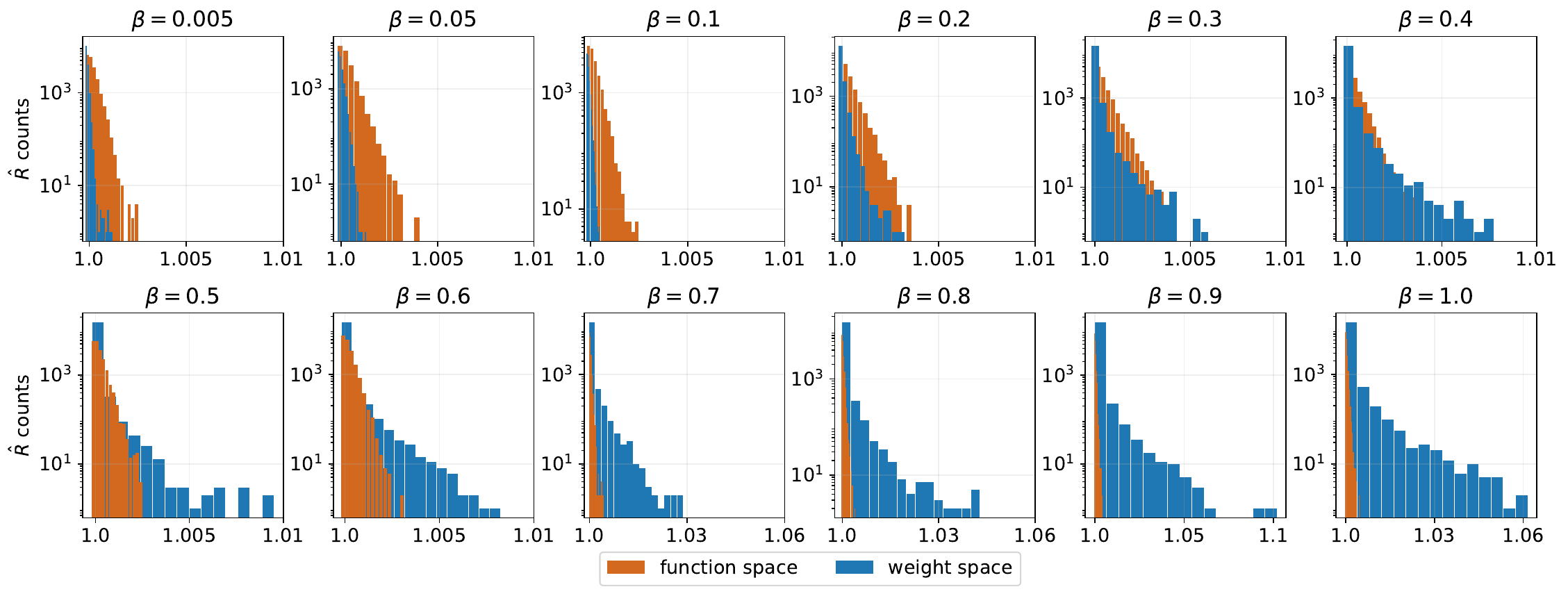}
\end{subfigure}
\caption{HMC diagnostics for Binary MNIST - full, over 4 chains of length 5000, for each $\beta$.}
\label{bin_full_diag_figure}

\end{figure}

  However, the MNIST experiments resulted in some acceptance probabilities below $0.4$. In these cases, the ideal step-size is likely smaller. However, for MNIST, decreasing the step-size would have meant the largest increase in running time. Furthermore, running short chains with smaller step-sizes showed very similar values for the train and test losses. Hence a decision was made to use larger step-sizes to be able to produce a longer chain.

The ESS values are relatively high, almost always retaining at least 5000 out of the 19800 samples (burn-in removed) across the four chains. The MCMC SE values are also low. The sudden decrease as $\beta$ is increased is explained by the fact that the true variance of the Gibbs posterior decreases as $\beta$ increases since the posterior becomes more concentrated around the minima of the loss landscape. We observe low ECE values implying that our samples correspond to well-calibrated models.
All $\hat{R}$ statistics are very close to 1.0, showing that our chains are approximately stationary. This finding motivates our use of the kl inversion bound (Theorem \ref{caruana_theorem}) which required an identically distributed sample.

\begin{figure}[h!]
    \centering
    \vspace{5em}
\begin{subfigure}
         \centering
    \includegraphics[width=0.95\textwidth]{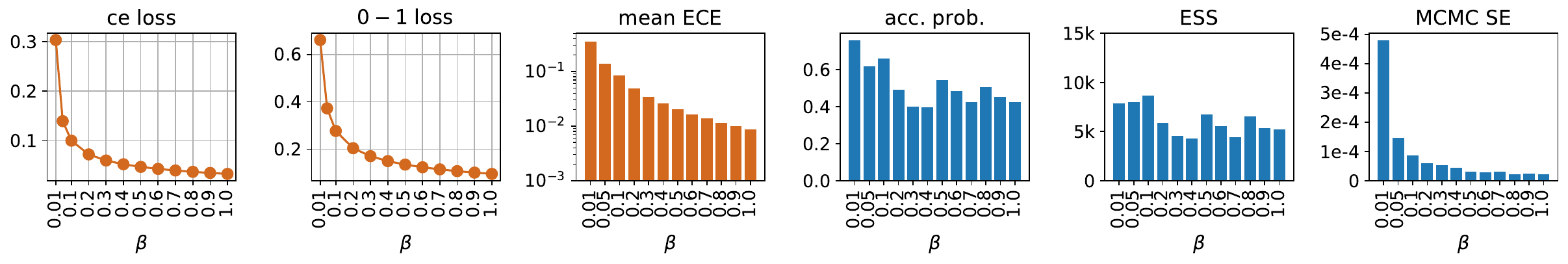}
\vspace{-1em}
\end{subfigure}
\begin{subfigure}
         \centering
    \includegraphics[width=0.95\textwidth]{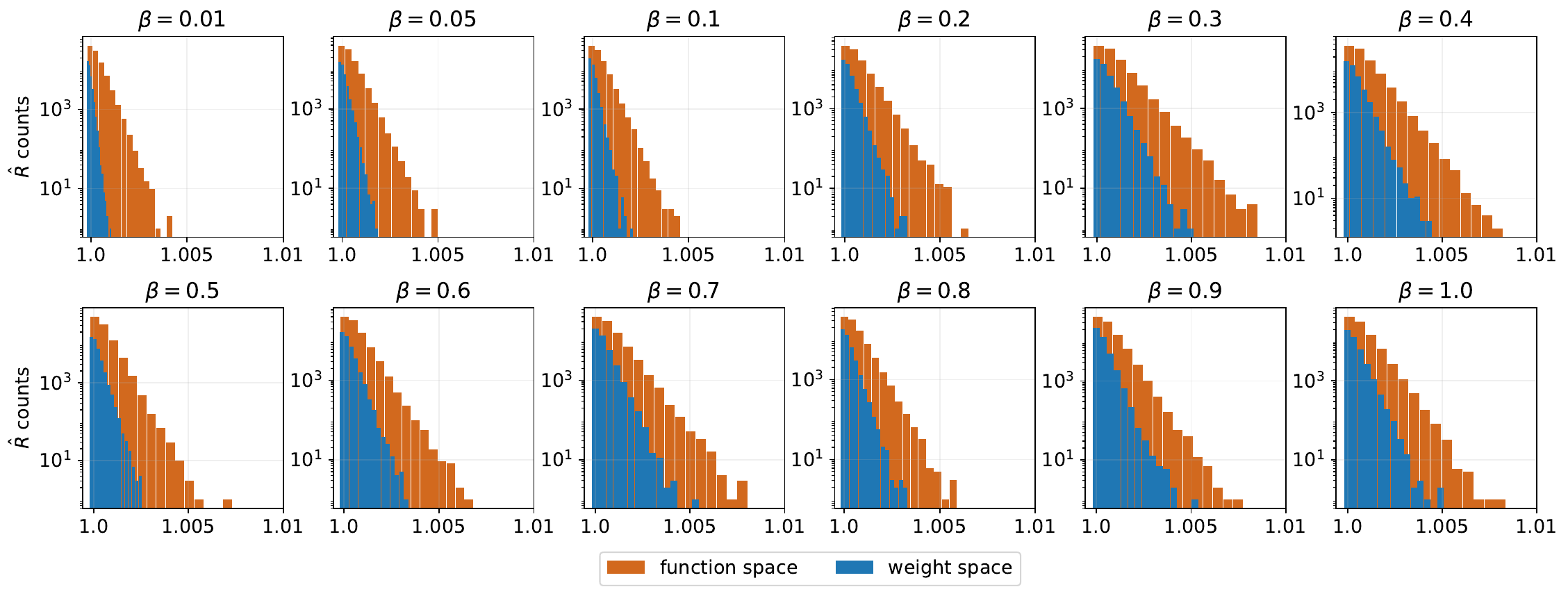}
\end{subfigure}
\caption{HMC diagnostics for 14$\times$14 MNIST - half, over 4 chains of length 5000, for each $\beta$.}
\label{small_half_diag_figure}
\vspace{5em}
\end{figure}

\begin{figure}[h!]
\vspace{2em}
    \centering
\begin{subfigure}
         \centering
    \includegraphics[width=0.95\textwidth]{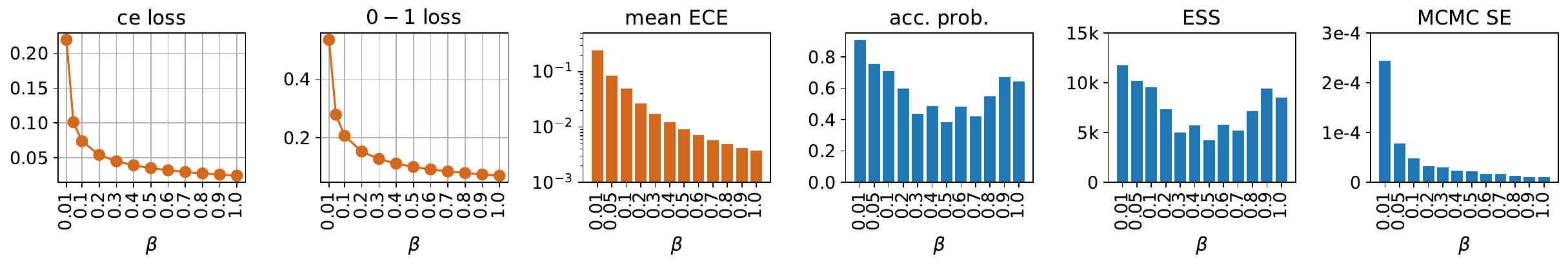}
\end{subfigure}

\begin{subfigure}
         \centering
    \includegraphics[width=0.95\textwidth]{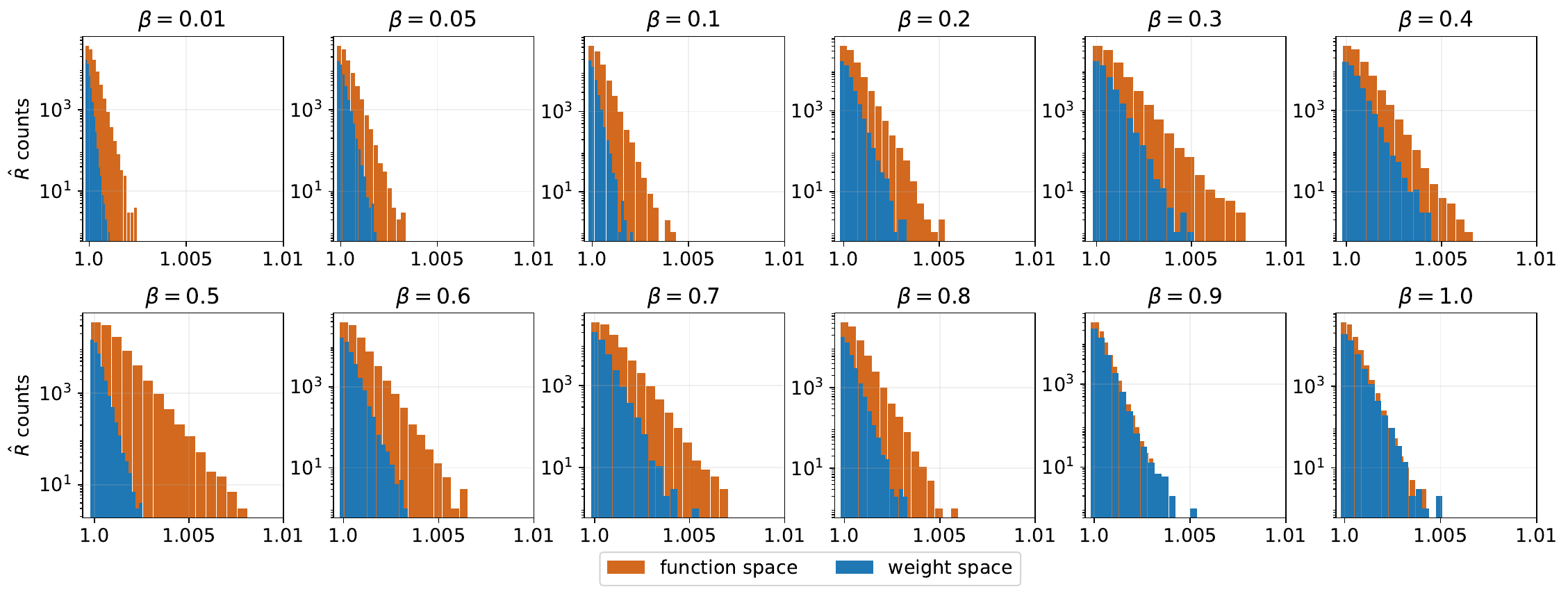}
\end{subfigure}
\caption{HMC diagnostics for $14 \times 14$ MNIST - full, over 4 chains of length 5000, for each  $\beta$.}
\label{small_full_diag_figure}
\end{figure}

\begin{figure}[h!]
    \centering
 
\begin{subfigure}
         \centering
    \includegraphics[width=0.95\textwidth]{Figures/mnist_half_diagnostics_1.pdf}

\end{subfigure}

\begin{subfigure}
         \centering
    \includegraphics[width=0.95\textwidth]{Figures/mnist_half_diagnostics_2.pdf}
\end{subfigure}
\caption{HMC diagnostics for MNIST - half, over 4 chains of length 5000, for each value of $\beta$.}
\label{half_diag_figure}
\end{figure}

\begin{figure}[h!]
    \centering
\begin{subfigure}
         \centering
    \includegraphics[width=0.95\textwidth]{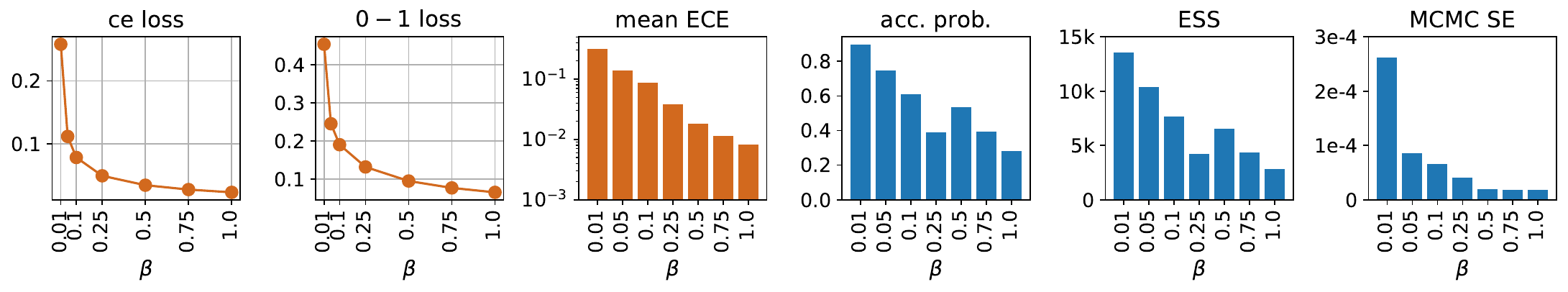}

\end{subfigure}

\begin{subfigure}
         \centering
    \includegraphics[width=0.95\textwidth]{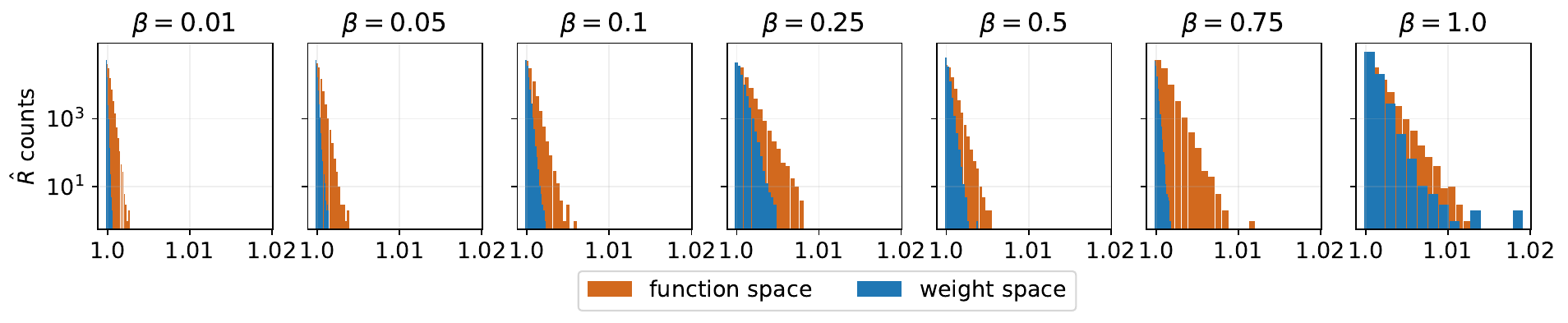}
\end{subfigure}
\caption{HMC diagnostics for MNIST - full, over 4 chains of length 5000, for each value of $\beta$.}
\label{full_diag_figure}
\end{figure}

\newpage
\subsection{Further results and analysis}
\label{further_results_Section}

Table \ref{ce_rc_table} shows RC estimation results for the (bounded) cross-entropy loss. We first observe that there are no bound violations, i.e. the test loss/error is always smaller than the corresponding RC. Further, the RCs for MFVI and the Gibbs posterior have the same magnitude. This suggests that there are no apparent problems with our estimates. Interestingly, the computed RCs for $\widetilde{L}^{\mathrm{ce}}_S(Q)$ (denoted as $\widetilde{\mathrm{ce}}$ in the tables) are almost always better for MFVI than for the approximate Gibbs posterior samples, despite the Gibbs posterior being the minimizer of the $\lambda$ bound. However, the RC-s on the $0-1$ losses (Table \ref{rc_table} above) show considerable improvement with respect to MFVI, especially for Binary MNIST. This discrepancy in cross-entropy and $0-1$ losses may indicate that we have considerably overestimated the KL divergence term. Since our KL divergences are small compared to the $0-1$ losses, they affect the $0-1$ RCs less. We are mainly interested in $0-1$ RCs, hence this is not a problem for us.

Out of the three high-probability bounds compared, the asymptotic bound gives the lowest risk certificates, which is explained by the fact that (i) more samples were used, and (ii) that this interval only guarantees a probability of $0.95$ as the number of samples tends to infinity.

The gap between the $0-1$ RCs for the Gibbs posterior and MFVI increases as the amount of train data is increased. MFVI gives similar RCs in both cases while Gibbs RCs improve by $3-7\%$ when the amount of data is increased. This is reasonable since Gibbs posteriors concentrate more around minima of the loss as $n$ is increased. However, our results show that MFVI is not able to utilize the extra data very well.

\begin{table}[h!]
\caption{Training and test metrics and RC estimates using our three bounds, in terms of the cross-entropy loss. \textbf{Bold} numbers indicate the tighter certificate out of the Gibbs and MFVI ones for the same dataset. }
\resizebox{\textwidth}{!}{
\begin{tabular}{llllllllll}
\hline
\multicolumn{3}{l}{Setup}              & \multicolumn{3}{l}{Train/test stats} & \multicolumn{2}{l}{ce RC with kl bound} & \multicolumn{2}{l}{ce RC with $\lambda$ bound} \\ \hline
Method   & Dataset             & Model & Train $\widetilde{\mathrm{ce}}$    & Test $\widetilde{\mathrm{ce}}$    & KL/n      & kl inverse         & asympt             & kl inverse             & asympt                \\ \hline
MFVI     & Binary Half         & 1L    & 0.0256      & 0.0281     & 0.0106    & \textbf{0.0703}    & 0.0581             & 0.0934                 & 0.0756                \\
Gibbs p. & Binary Half         & 1L    & 0.0166      & 0.0166     & 0.0205    & 0.0706             & \textbf{0.0511}    & \textbf{0.0915}        & \textbf{0.0663}       \\
MFVI     & Binary              & 1L    & 0.0270      & 0.0272     & 0.0105    & \textbf{0.0707}    & 0.0580             & 0.0941                 & 0.0756                \\
Gibbs p. & Binary              & 1L    & 0.0122      & 0.0125     & 0.0195    & 0.0598             & \textbf{0.0315}    & \textbf{0.0777}        & \textbf{0.0402}       \\ \hline
MFVI     & $14 \times 14$ Half & 2L    & 0.0481      & 0.0481     & 0.0148    & \textbf{0.1238}    & 0.0947             & \textbf{0.1710}        & 0.1263                \\
Gibbs p. & $14 \times 14$ Half & 2L    & 0.0335      & 0.0361     & 0.0477    & 0.1410             & \textbf{0.0946}    & 0.1888                 & \textbf{0.1237}       \\
MFVI     & $14 \times 14$      & 2L    & 0.0463      & 0.0460     & 0.0140    & 0.1196             & 0.0906             & 0.1631                 & 0.1194                \\
Gibbs p. & $14 \times 14$      & 2L    & 0.0246      & 0.0259     & 0.0381    & \textbf{0.1118}    & \textbf{0.0702}    & \textbf{0.1484}        & \textbf{0.0910}       \\ \hline
MFVI     & MNIST Half          & 2L    & 0.0430      & 0.0437     & 0.0199    & \textbf{0.1176}    & 0.0966             & \textbf{0.1570}        & 0.1263                \\
Gibbs p. & MNIST Half          & 2L    & 0.0324      & 0.0356     & 0.0428    & 0.1347             & \textbf{0.0935}    & 0.1792                 & \textbf{0.1223}       \\
MFVI     & MNIST               & 2L    & 0.0423      & 0.0419     & 0.0196    & \textbf{0.1010}    & 0.0947             & \textbf{0.1317}        & 0.1226                \\
Gibbs p. & MNIST               & 2L    & 0.0233      & 0.0253     & 0.0334    & 0.1065             & \textbf{0.0673}    & 0.1401                 & \textbf{0.0872}       \\ \hline
\end{tabular}}
\label{ce_rc_table}
\end{table}

When increasing model size, the gap between the $0-1$ RCs decreases as we move from Binary MNIST to $14 \times 14$ MNIST and remains roughly the same as we move to MNIST. This supports the hypothesis of variational approximations become more accurate as model depth is increased.

%}

%References follow the acknowledgments in the camera-ready paper. Use unnumbered first-level heading for
%the references. Any choice of citation style is acceptable as long as you are
%consistent. It is permissible to reduce the font size to \verb+small+ (9 point)
%when listing the references.
%Note that the Reference section does not count towards the page limit.
%\medskip

%%%%%%%%%%%%%%%%%%%%%%%%%%%%%%%%%%%%%%%%%%%%%%%%%%%%%%%%%%%%

\end{document}